\documentclass[journal]{IEEEtran}

\usepackage{amsmath,amsfonts,amssymb,amsthm}
\usepackage{graphicx}
\usepackage{algorithmic}
\usepackage{bbm}
\usepackage{mathrsfs}
\usepackage{dsfont}
\usepackage{multirow}
\usepackage{mathtools}
\usepackage{booktabs}
\usepackage{float}
\usepackage{multicol}
\usepackage{svg}
\usepackage{cuted}
\usepackage{pifont}
\usepackage[table]{xcolor}
\usepackage{float}

\usepackage{tabularx}
\usepackage{array}

\newcommand{\cmark}{\ding{51}} 
\newcommand{\xmark}{\ding{55}} 
\definecolor{plotblue}{HTML}{4F9DDC}  
\definecolor{plotyellow}{HTML}{F5DEB3} 
\definecolor{plotgreen}{HTML}{1E8449}  
\usepackage{makecell}

\usepackage{arydshln}
\usepackage{url}
\usepackage{xr-hyper}
\usepackage{svg}
\usepackage{subfiles}
\usepackage{hyperref}

\newtheorem{theorem}{Theorem}[section]

\newtheorem{prop}[theorem]{Proposition}

\usepackage{setspace}
\begin{document}
\title{Design Choices in Splitting-Based Self-Supervised Sparse-View CT Reconstruction}
\author{%
  Nadja~Gruber\IEEEauthorrefmark{1}\thanks{Corresponding authors: nadja.gruber@uibk.ac.at, johannes.schwab@fh-kufstein.ac.at}%
  , Lukas~Neumann\IEEEauthorrefmark{2}%
  , Ander~Biguri\IEEEauthorrefmark{3}%
  , Gyeongha~Hwang\IEEEauthorrefmark{4}%
  , Markus~Haltmeier\IEEEauthorrefmark{5}%
  , Johannes~Schwab\IEEEauthorrefmark{6}%
  \thanks{\IEEEauthorrefmark{1} Department of Computer Science, University of Innsbruck, Austria.}%
  \thanks{\IEEEauthorrefmark{2} Institute of Basic Sciences in Engineering Science, University of Innsbruck, Austria.}%
  \thanks{\IEEEauthorrefmark{3} Department of Applied Mathematics and Theoretical Physics, University of Cambridge, UK.}%
  \thanks{\IEEEauthorrefmark{4} Department of Mathematics, Yeungnam University, Gyeongsan, Korea.}%
  \thanks{\IEEEauthorrefmark{5} Department of Mathematics, University of Innsbruck, Austria.}%
  \thanks{\IEEEauthorrefmark{6} University of Applied Sciences Kufstein, Austria.}%
}

\maketitle
\begin{abstract}
Self-supervised data splitting has emerged as a promising paradigm for sparse-view CT reconstruction, enabling training from incomplete measurements without fully sampled ground truth. However, the influence of key design choices, including partitioning strategy, preprocessing, and inference, remains insufficiently understood. In this work, we introduce a unified framework that decomposes splitting-based reconstruction into these three components, enabling controlled comparison of existing methods and two incremental extensions: multi-partition splitting and an alternative inference strategy. Experiments on simulated LoDoPaB-CT data under independent and correlated noise, together with validation on the real-world 2DeteCT dataset, show that the optimal partitioning strategy strongly depends on the measurement noise structure. Lattice-based splitting performs favorably under independent noise, whereas angular masking is more robust under correlated noise and real measured data. Multi-partition splitting consistently improves over pure projection-wise splitting in several settings. Complementary perceptual and structural metrics, including LPIPS and HaarPSI, reveal differences between masking strategies that are less apparent from PSNR and SSIM alone. These results provide practical guidelines for designing self-supervised sparse-view CT reconstruction methods and highlight the limitations of common independence assumptions in realistic imaging environments.
\end{abstract}
\begin{IEEEkeywords}
Computed tomography, self-supervised learning, sparse-view reconstruction, data partitioning, masking strategies, inverse problems
\end{IEEEkeywords}

\section{Introduction}\label{sec:intro}
In imaging modalities such as X-ray computed tomography (CT), cryo-electron microscopy, and photoacoustic imaging, images are reconstructed from indirect and often incomplete measurements. In practice, acquisition constraints such as limited scan time, radiation dose, or physical limitations lead to severely undersampled inverse problems of the form
\begin{align}\label{inverse}
y = \mathcal{N}(A x),
\end{align}
where $A: X \rightarrow Y$ denotes the forward operator, $\mathcal{N}$ represents a noise corruption process, $x \in X$ is the unknown image, and $y \in Y$ are the measured data. Such inverse problems are often ill-posed due to the ill-conditioned nature of $A$ and the presence of noise, meaning additional prior information is required for stable reconstruction.

Classical variational methods incorporate handcrafted regularizers such as smoothness or total variation \cite{scherzer2009variational,rudin1992nonlinear,bredies2010total,zhang2014accurate}. While theoretically well established, their performance degrades in strongly undersampled regimes and depends heavily on careful parameter selection.
More recently, deep learning approaches have achieved state-of-the-art performance by learning image priors directly from data. However, most methods rely on supervised training with paired measurements and ground-truth images, which are often unavailable in practical clinical or industrial imaging settings \cite{belthangady2019applications}.

To overcome this limitation, self-supervised learning approaches have emerged to exploit structure within the measurements themselves. A prominent class of such methods is based on measurement splitting, where observations are divided into complementary subsets that supervise each other. Originating from denoising formulations such as Noise2Noise \cite{lehtinen2018noise2noise}, Noise2Void \cite{krull2019noise2void}, and Noise2Self \cite{batson2019noise2self}, these ideas have been extended to general inverse problems, including sparse-view CT reconstruction \cite{hendriksen2020noise2inverse, gruber2024sparse2inverse, unal2021self}. Related approaches also exploit equivariance priors to enable learning from incomplete data \cite{chen2021equivariant, chen2022robust, schut2026equivariance2inverse}.

Despite this progress, key design choices in splitting-based reconstruction methods have rarely been evaluated in a unified and controlled framework. In particular, key components such as data partitioning, preprocessing of masked measurements, and inference strategies are typically studied within specific frameworks, making it difficult to isolate their individual effects. As a result, it remains unclear which design choices are most critical and how they interact under different noise assumptions.
This issue is especially critical in X-ray CT, where detector physics introduce structured noise correlations. In particular, scintillator blur induces spatially correlated noise between neighboring detector elements, violating the independence assumptions underlying many splitting-based methods \cite{graas2025scintillator}. This raises an important question: how robust are splitting strategies when the assumption of measurement independence is violated?

In this work, we address this question by introducing a unified framework that decomposes splitting-based self-supervised reconstruction into three core components: partitioning, preprocessing, and inference. This framework enables a controlled comparison of existing methods and new extensions across different noise regimes, including correlated scintillator-induced noise. 

Furthermore, evaluating self-supervised methods under realistic noise is highly sensitive to the choice of metric. While PSNR is widely used, it often fails to reflect perceptual or diagnostically relevant structural properties. Motivated by findings in \cite{breger2024study}, we complement standard error-based metrics with advanced perceptual measures, such as LPIPS~\cite{zhang2018unreasonable} and HaarPSI~\cite{reisenhofer2018haar}, aiming to capture subtle reconstruction artifacts that may not be reflected by conventional metrics.

The main contributions of this work are:
\begin{itemize}
\item A unified framework that decomposes splitting-based reconstruction into partitioning, preprocessing, and inference, providing a common perspective on existing self-supervised CT reconstruction methods.
\item A systematic benchmarking of these components under independent and correlated noise regimes, including scintillator-induced correlations, using both synthetic LoDoPaB-CT data and the real-world 2DeteCT dataset \cite{kiss20232detect}, evaluated with error-based and perceptual metrics.
\item The introduction of multi-partition masking and a novel inference strategy for splitting-based reconstruction, allowing complementary measurement partitions to be incorporated during training and inference.
\item An analysis showing that the optimal splitting strategy strongly depends on the noise structure, with angular masking being more robust under correlated noise, while lattice masking is preferred under independent noise.
\item Practical design guidelines for self-supervised CT reconstruction, summarizing how partitioning, preprocessing, and inference strategies should be selected under different noise regimes. 
\end{itemize}

\begin{figure*}
    \centering
\includegraphics[width=0.7\linewidth]{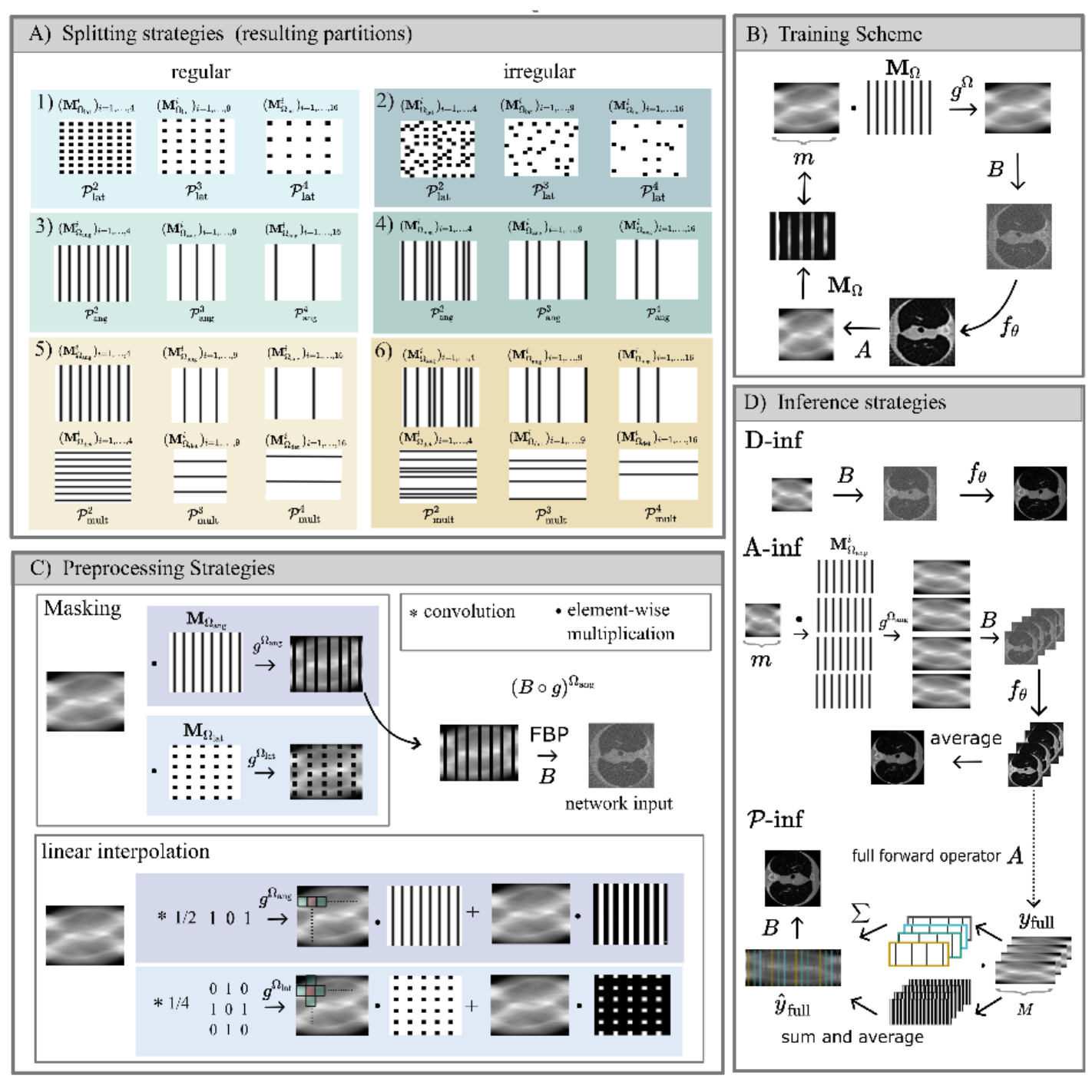}
\caption{Unified framework components for self-supervised CT reconstruction. 
\textbf{(A) Splitting strategies:} Regular and irregular partitions for lattice ($\mathcal{P}_{\text{lat}}$), angular ($\mathcal{P}_{\text{ang}}$), and multi-partition ($\mathcal{P}_{\text{mult}}$) masking. 
\textbf{(B) Training scheme:} Data-domain loss enforcing consistency via forward projection $A$ on held-out coordinates after FBP reconstruction $B$. 
\textbf{(C) Preprocessing:} Comparison of zero-filling versus linear interpolation. 
\textbf{(D) Inference strategies:} Direct (\textit{D-inf}), Average (\textit{A-inf}) via image ensembling, and $\mathcal{P}$-invariant (\textit{P-inf}) via dense sinogram refinement. 
Sinogram aspect ratios are modified for clarity.}
    \label{fig:overview} \label{overview}
\end{figure*}

\section{Methods}\label{sec:method}
In this section, we describe the proposed self-supervised splitting-based learning and inference strategies for sparse-view CT reconstruction considered in our study. 
We restrict the formulation to a 2D parallel-beam geometry. However, the overall concept can be extended straightforwardly to more general geometries in both 2D and 3D and inverse problems.
\subsection{Notation}
Let $x \in X$ represent a discretized ground-truth image, and let $y = \mathcal{N}(Ax) \in Y = \mathbb{R}^{m \times n}$ denote the noisy, sparse-view CT measurements (sinogram). Here, $m$ corresponds to the number of equidistant projection angles in $[0, \pi - \pi/m]$, and $n$ denotes the number of detector elements. The forward operator $A: X \rightarrow Y$ maps an image to its clean tomographic projections, while $\mathcal{N}: Y \rightarrow Y$ models the measurement noise process, which may be non-additive. The objective is to recover the unknown clean image $x$ from $y$, where the number of projection angles $m$ is significantly smaller than the image dimensions.
\subsection{$\mathcal{P}$-Invariance}\label{invariance}

Throughout this section, we assume that $\Omega \subset \{1,\dots, m\}\times\{1,\dots,n\}\eqqcolon [m]\times[n]$ is any subset of the indices in the sinogram space $\mathbb{R}^{m\times n}$. For such $\Omega$, we define the sinogram mask $\mathbf{M}_\Omega \in \{0,1\}^{m\times n}$ by
\begin{align}
(\mathbf{M}_\Omega)_{i,j} \coloneqq 
\begin{cases}
    1, \quad &(i,j)\in\Omega\\
    0, &\text{else,}
\end{cases}
\end{align}
and the complement of a set $\Omega$ by $\Omega^c = ([m]\times[n])\setminus\Omega$ such that $\mathbf{M}_{\Omega^c} = \mathbf{1}-\mathbf{M}_\Omega.$ All multiplications with masking matrices $\mathbf{M}_{\Omega}$ are to be understood point-wise (Hadamard product).

The core idea behind self-supervised splitting methods is the use of $\mathcal{P}$-invariant mappings, as introduced in the Noise2Self framework \cite{batson2019noise2self}. Specifically, these mappings are designed such that certain groups of output coordinates are independent of the corresponding groups of input coordinates.

Formally, let $\mathcal{P}$ be a partition of $[m]\times[n]$, which is a collection of disjoint subsets $\Omega\subset[m]\times[n]$ satisfying $\biguplus_{\Omega\in\mathcal{P}}\Omega = [m]\times[n]$. For $\Omega \in \mathcal{P}$ and data $z\in\mathbb{R}^{m\times n}$, a function $f^\mathcal{P}:\mathbb{R}^{m\times n} \rightarrow \mathbb{R}^{m\times n}$ is called $\Omega$-invariant if $\mathbf{M}_\Omega f^\mathcal{P}(z)$ depends only on $\mathbf{M}_{\Omega^c}z$. It is called $\mathcal{P}$-invariant if it is $\Omega$-invariant for every $\Omega\in \mathcal{P}$.

A general way to construct a $\mathcal{P}$-invariant function is the following: for an arbitrary function $f:\mathbb{R}^{m\times n}\rightarrow\mathbb{R}^{m\times n}$ and any partition $\mathcal{P}$, a $\mathcal{P}$-invariant map can be defined by
\begin{align}\label{eq:assembly}
\forall y \in\mathbb{R}^{m\times n}:\,\, f^\mathcal{P}(y)= \sum_{\Omega\in\mathcal{P}} \mathbf{M}_\Omega f( \mathbf{M}_{\Omega^c} (y)).
\end{align}
Then, by construction, $f^\mathcal{P}$ is a $\mathcal{P}$-invariant function. For denoising problems, where $A=\mathrm{Id}$, \eqref{eq:assembly} can be directly used for prediction, as masking operates within a single domain. In contrast, for a general operator $A$, data splitting and reconstruction occur in different domains, so masking in the measurement space does not translate directly to masking in the image space, making inference less straightforward.

For sparse-view CT, it remains unclear which partitioning strategies are best suited to different noise regimes, motivating the systematic comparison presented in this work. In practice, the abstract partition $\mathcal{P}$ defines the specific geometric splitting strategy applied to the sinogram. In this work, we analyze different choices for $\mathcal{P}$—such as angular, projection-wise, and lattice-based splitting (detailed in Section~\ref{splitting})—to evaluate how these partitioning choices interact with different physical noise characteristics, which can vary significantly depending on the underlying inverse problem.

\subsection{Self-supervised Splitting-based Methods}

Applied to inverse problems of the form \eqref{inverse}, splitting-based approaches partition the projection domain into subsets $\mathcal{P}$, under the assumption that the noise is statistically independent across the corresponding measurement groups. For each subset $\Omega \in \mathcal{P}$, the measurements on $\Omega$ are withheld from the network input and used only for supervision, while the complementary measurements on $\Omega^c$ are used to form an input reconstruction. A neural network $f_\theta$ is then trained in the reconstruction domain and evaluated through a data-domain consistency loss on the held-out measurements (Figure~\ref{fig:overview}B). Hendriksen et al.~\cite{hendriksen-2021-tomos} demonstrated that this Noise2Self variant achieves superior performance compared to alternative self-supervised strategies. However, in practical CT systems, detector effects such as scintillator-induced correlations can violate the independent noise assumption.

To construct the network input from the available measurements, we consider, for each $\Omega \in \mathcal{P}$, a linear reconstruction pipeline
$B \circ g^{\Omega^c} : Y \rightarrow X,$
where $B$ denotes filtered backprojection (FBP). The preprocessing function $g^{\Omega^c}$ maps the masked sinogram $\mathbf{M}_{\Omega^c}y$ to a complete sinogram representation before reconstruction. In this work, $g^{\Omega^c}$ either leaves the missing entries as zero or estimates them from the available measurements in $\Omega^c$.

\subsubsection{Preprocessing Strategies}\label{sec:preprocessing}
In this manuscript, we consider two choices for the preprocessing operator $g^{\Omega^c}$, which maps the masked sinogram $\mathbf{M}*{\Omega^c}y$ to a complete sinogram representation before applying FBP:
\begin{itemize}
\item \textbf{Zero-filling:} missing entries in $\Omega$ are left as zero, i.e., $g^{\Omega^c}(\mathbf{M}_{\Omega^c}y)=\mathbf{M}_{\Omega^c}y$.
\item \textbf{Linear interpolation:} missing entries in $\Omega$ are estimated from the available measurements in $\Omega^c$ using linear interpolation.
\end{itemize}
The described options are visualized in Figure~\ref{fig:overview}\textbf{C}.

\subsubsection{Network Training}
We now turn to the training strategy employed in splitting-based schemes. An overview is provided in Figure~\ref{fig:overview} \textbf{B}. The objective is to train a neural network $f_\theta : X \rightarrow X$ by minimizing a loss function defined in the data domain over the training set. This can be formulated as
\[
\mathcal{L}^Y(\theta) = \frac{1}{\lvert\mathcal{P}\rvert}\sum_{\Omega\in\mathcal{P}}\left\| \textbf{M}_\Omega A\circ f_\theta\left((B \circ g^{\Omega^c})(\mathbf{M}_{\Omega^c} y)\right) - \mathbf{M}_\Omega y \right\|^2 .
\]
We denote the minimizer of $\mathcal{L}^Y$ by $\theta^\ast$.
In our experiments, we employ a U-Net-type \cite{ronneberger2015u} architecture for $f_\theta$. It has been shown \cite{gruber2024sparse2inverse} that minimizing the loss in the data domain is particularly advantageous in sparse-view settings, as opposed to minimizing it in the reconstruction domain, i.e.
\[
\mathcal{L}^X(\theta) = \frac{1}{\lvert\mathcal{P}\rvert}\sum_{\Omega\in\mathcal{P}}\left\| f_\theta \left(B \circ g^{\Omega^c}(\mathbf{M}_{\Omega^c} y)\right) - B \circ g^{\Omega}(\mathbf{M}_\Omega y)\right\|^2 ,
\]
which was proposed in \cite{hendriksen2020noise2inverse} for full-view CT reconstruction. While $\mathcal{L}^X$ enables direct supervision in the image domain, its targets are themselves affected by undersampling artifacts. In sparse-view CT, these artifacts become severe and may be learned by the network. In contrast, $\mathcal{L}^Y$ enforces consistency with the measured projections and has been shown to yield superior reconstruction quality in sparse-view settings \cite{gruber2024sparse2inverse}.

\subsection{Inference Strategies}\label{sec:inference}
In the denoising and inverse problems literature on splitting-based self-supervised learning schemes \cite{batson2019noise2self,hendriksen2020noise2inverse,unal2021self}, two inference strategies are commonly used to obtain the final reconstruction $\hat{x}$: direct inference (D-inf) and average inference (A-inf). In addition, we consider a proposed extension, referred to as $\mathcal{P}$-invariant inference ($\mathcal{P}$-inf).
 We refer to them as:
\begin{itemize}
    \item \textbf{Direct inference (D-inf)},
    \item \textbf{Average inference (A-inf)},
    \item \textbf{$\mathcal{P}$-invariant inference ($\mathcal{P}$-inf)}.
\end{itemize}
In the following, the three strategies will be explained in detail.
\subsubsection{Direct Inference - D-inf}
We begin with D-inf, which has been used as a standard choice in both denoising \cite{batson2019noise2self} and image reconstruction \cite{unal2021self}. The trained neural network $f_{\theta^\ast}$ is applied directly to the measured data:
\begin{align}\label{eq:single_inf}
\hat{x} = f_{\theta^\ast}(B y).
\end{align}
\subsubsection{Average Inference - A-inf}
The second strategy, A-inf, introduced in Noise2Inverse \cite{hendriksen2020noise2inverse}, applies $f_{\theta^\ast}$ to multiple sub-reconstructions and averages the results:
\begin{align}\label{eq:multi_inf}
    \hat{x} = \frac{1}{|\mathcal{P}|}\sum_{\Omega \in \mathcal{P}}
    f_{\theta^\ast}\bigl(B \circ g^{\Omega^c} (\mathbf{M}_{\Omega}y)\bigr).
\end{align}
Unlike D-inf, this corresponds to an ensemble over complementary data splits.
Note that for both D-inf and A-inf, the forward projection $A(\hat{x})$ depends on the measured data $y$ at indices $\Omega$, i.e., $\textbf{M}_\Omega A(\hat{x})$ depends on $\textbf{M}_\Omega y$. Hence, $A(\hat{x})$ is not $\mathcal{P}$-invariant. This observation motivates the design of an inference strategy that enforces $\mathcal{P}$-invariance.

\subsubsection{$\mathcal{P}$-invariant inference}
We modify A-inf to aggregate sub-reconstructions in sinogram space rather than image space, ensuring the $\Omega$-component is never averaged into itself and thus restoring $\mathcal{P}$-invariance. This is enabled by an augmented sinogram representation via finer angular sampling, which reduces null-space ambiguities of $A$ and better preserves the image-space information learned by $f_\theta$.
Let $m$ denote the number of projection angles in the original sinogram and choose a refinement factor $r \in \mathbb{N}$ such that $M = r m$. This induces a refined sinogram grid of size $M \times n$.
We define a refinement map $\iota(i,j) = (r(i-1)+1, j)$,
which embeds the original sampling locations into the refined grid. Further, let $\bar{\Omega}\subset[M]\times[n]$. The observed measurements correspond to a binary mask $\mathbf{M}_{\mathrm{\bar{\Omega}}}$ on the refined grid, defined by
$\mathbf{M}_{\mathrm{\bar{\Omega}}}(\iota(i,j)) = 1, \quad \mathbf{M}_{\mathrm{\bar{\Omega}}} = 0 \text{ otherwise},
$
for $(i,j)\in\Omega.$
Let $\mathcal{A}: X \rightarrow \mathbb{R}^{M \times n}$ be the forward operator with dense angular sampling, whose nullspace is smaller than that of the corresponding sparse-view operator.
We first construct a coarse prediction on the observed angles:
\[
\hat{y}_{\bar{\Omega}} =
\sum_{\Omega \in \mathcal{P}}
\mathbf{M}_{\bar{\Omega}}\,
\mathcal{A}\circ f_{\theta^\ast}
\big((B \circ g^{\Omega^c})(\mathbf{M}_{\Omega} y)\big).
\]
For the unobserved part of the data, we define
\[
\hat{y}_{\bar{\Omega}^c} =
\frac{1}{|\mathcal{P}|}
\sum_{\Omega \in \mathcal{P}}
\mathbf{M}_{\bar{\Omega}^c}\,
\mathcal{A} \circ f_{\theta^\ast}
\big((B \circ g^{\Omega^c})(\mathbf{M}_{\Omega} y)\big).
\]
Finally, the reconstructed image is obtained via
\[
\hat{x} = B(\hat{y}_{\bar{\Omega}} + \hat{y}_{\bar{\Omega}^c}).
\]
A graphical description of the three inference strategies can be seen in Figure~\ref{fig:overview} \textbf{D}.
This inference is $\mathcal{P}$-invariant, as entries in $\bar{\Omega}^c$ depend only on the observed measurements in $\bar{\Omega}$.

\subsection{Compared Splitting Strategies}\label{splitting}

In self-supervised reconstruction methods such as Proj2Proj \cite{unal2021self} and Noise2Inverse \cite{hendriksen2020noise2inverse}, a subset $\Omega \subset [m]\times[n]$ of the measured data is masked during training. For example, \cite{unal2021self} applies periodic masking of detector elements, while angular splitting \cite{hendriksen2020noise2inverse, gruber2024sparse2inverse} removes entire projection angles. In the following, $d \in \{2, 3, 4\}$ denotes the base partitioning factor and $s=d^2\in\{4,9,16\}$ denotes the actual total number of splits within a geometric partition, determining the effective sampling density per split. This convention is used for all evaluated partitioning geometries to enable comparisons at matched number of splits.

\subsubsection{Single-Partition Splitting}
We refer to single-partition masking as the setting in which one fixed partition is used. This is the standard choice in existing self-supervised reconstruction methods, including Proj2Proj \cite{unal2021self} and Noise2Inverse \cite{hendriksen2020noise2inverse}, as well as our previous work \cite{gruber2024sparse2inverse}. We later extend this setting to multiple partitions.

\noindent\textit{Regular Lattice Splitting ($\mathcal{P}_{\textrm{lat}}^d$).}
This strategy, introduced in \cite{unal2021self}, partitions the sinogram into a regular grid of non-overlapping patches of size $d \times d$, yielding a total of $s=d^2$ splits. Within each patch, all pixels sharing the same relative position are grouped into a subset. Formally, for a given index $d$, the partition is denoted as $\mathcal{P}_{\textrm{lat}}^d$ and is given by
\[
\mathcal{P}_{\textrm{lat}}^d
=
\left\{
\Omega_{\textrm{lat}}^{(i,h)} \;:\; i,h = 1,\dots,d
\right\},
\]
with
\[
\Omega_{\textrm{lat}}^{(i,h)}
=
\left\{
(j,l) \in [m]\times[n]
\;\middle|\;
j \bmod d = i,\;
l \bmod d = h
\right\}.
\]
Thus, each subset contains all pixels that share the same relative location within each patch. Examples for different values of $d$ are shown in Fig.~\ref{fig:overview} \textbf{A} 1). In the experiments and plots, we evaluate $\mathcal{P}_{\textrm{lat}}^d$ with $d \in \{2, 3, 4\}$, respectively.

\noindent\textit{Irregular Lattice Splitting.}
In contrast, irregular lattice masking randomly selects one pixel per $d \times d$ patch, rather than fixing a deterministic position. This yields a non-deterministic partition $\mathcal{P}_{\textrm{lat}}^d$, as illustrated in Fig.~\ref{fig:overview} \textbf{A} 2).

\noindent\textit{Regular Angular Splitting ($\mathcal{P}_{\textrm{ang}}^d$):}
Following previous work \cite{hendriksen2020noise2inverse, gruber2024sparse2inverse}, masking can also be performed along entire projection angles. To maintain notation consistency across all benchmarked methods, we define the total number of angular splits via $s = d^2$. For a given index $d \in \{2, 3, 4\}$, subsets of projection directions are defined as
\[
\Omega_{\textrm{ang}}^i = \{j \in [m] \mid j \bmod d^2 = i\} \times [n], 
\quad i = 1, \dots, d^2.
\]
This induces the partition
\[
\mathcal{P}_{\textrm{ang}}^d \coloneqq \{\Omega_{\textrm{ang}}^1, \dots, \Omega_{\textrm{ang}}^{d^2}\},
\]
illustrated in Fig.~\ref{fig:overview} \textbf{A} 3), consisting of $s = d^2$ angular splits. These settings are labeled as $\mathcal{P}_{\textrm{ang}}^2$, $\mathcal{P}_{\textrm{ang}}^3$, and $\mathcal{P}_{\textrm{ang}}^4$, respectively.

\noindent\textit{Irregular Angular Splitting.}
We further consider an irregular variant, where one projection angle is randomly masked within blocks of $s = d^2$ consecutive angles (Fig.~\ref{fig:overview} \textbf{A} 4)). In the experiments, the configurations follow $d \in \{2, 3, 4\}$.

\subsubsection{Proposed Multi-Partition Splitting (MPS)}

In contrast to single-partition approaches, one can also consider a family of multiple partitions $\mathcal{P}_i, i=1,\dots,P$ of the measurement index set that enforce simultaneous invariance along different geometric configurations.

\noindent\textit{Joint Angular and Detector Splitting ($\mathcal{P}_{\textrm{mult}}^d$):}
This strategy applies multi-partition masking ($P=2$) by introducing two separate, independent partition systems that operate along the angular and detector offset dimensions simultaneously. We define the primary angular partition component consisting of $s$ splits as:
\[
\mathcal P_{\textrm{ang}}^{s} = \{\Omega_{\textrm{ang}}^1, \dots, \Omega_{\textrm{ang}}^{s}\},
\]
where each subset $\Omega_{\textrm{ang}}^i$, $i = 1, \dots, s$ isolates projection angles uniformly distributed over $[m]$, i.e. 
\[
\Omega_{\textrm{ang}}^j = [m] \times \{ \nu \in [m] \mid \nu \bmod s = j \}, \quad j = 1, \dots, s.
\]
\noindent Similarly, we define the secondary detector partition component, also consisting of $s$ splits, as:
\[
\mathcal P_{\textrm{det}}^{s} = \{\Omega_{\textrm{det}}^1, \dots, \Omega_{\textrm{det}}^{s}\},
\]
where the coordinates are grouped element-wise via:
\[
\Omega_{\textrm{det}}^j = [m] \times \{ \nu \in [n] \mid \nu \bmod s = j \}, \quad j = 1, \dots, s.
\]
Each $\Omega_{\textrm{det}}^j$ selects vertical detector columns in a periodic and uniform manner (see Fig.~\ref{fig:overview} \textbf{A} 5)). 

Because this joint strategy incorporates both partition families simultaneously, the total self-supervised loss function aggregates their independent data-domain behaviors, evaluating $s = d^2$ splits within each geometric system:
\begin{align*}
\mathcal{L}_{\textrm{mult}}^d(\theta) = 
&\frac{1}{s}\sum_{\Omega \in \mathcal{P}_{\textrm{ang}}^{s}}
\| \mathbf{M}_{\Omega} A\left( f_\theta\left((B\circ g^{\Omega^c}) \mathbf{M}_{\Omega^c} y\right)\right) - \mathbf{M}_{\Omega} y \|_2^2 \\
+ &\frac{1}{s}\sum_{\Omega \in \mathcal{P}_{\textrm{det}}^{s}}
\| \mathbf{M}_{\Omega} A\left( f_\theta\left((B\circ g^{\Omega^c}) \mathbf{M}_{\Omega^c} y\right)\right) - \mathbf{M}_{\Omega} y \|_2^2.
\end{align*}
In the experiments, plots, and tables, this joint framework is denoted as $\mathcal{P}_{\textrm{mult}}^d$, meaning that the network is optimized under $s \in \{4, 9, 16\}$ angular splits and detector splits concurrently.

\noindent In the following, we show that training with multiple partitions enables the construction of a mapping that is invariant with respect to an induced partition $\hat{\mathcal{P}}$.

\subsection{Theoretical Analysis}
We demonstrate that multi-partition masking yields an inference function that is invariant with respect to the choice of a single-partition splitting mask. Furthermore, we establish a theorem characterizing the minimization error in the presence of noise that is correlated along the projection direction.
\begin{prop}[Invariance for multiple partitions]\label{assemble}
Let $\Omega = [m]\times[n]$ and let $
\mathcal P_k = \{\Omega_k^1,\dots,\Omega_k^{|\mathcal P_k|}\},
\qquad k=1,\dots,K,
$
be $K$ partitions of $\Omega$.
Consider the loss
\begin{align*}
\mathcal L^Y(\theta)
=
\sum_{k=1}^K
\sum_{i=1}^{|\mathcal P_k|}
\Big\|
\mathbf M_{\Omega_k^i}
\big(
A\!\circ\! f_\theta(\mathcal B\!\circ\! g^{\Omega_k^{i^c}})
(\mathbf M_{\Omega_k^{i^c}} y)
\big)
-
\mathbf M_{\Omega_k^i} y
\Big\|_2^2.
\end{align*}
Assume that $f_{\theta^\ast}$ minimizes $\mathcal L(\theta)$ (in expectation) under the conditional independence assumption of self-supervised masking, i.e. the noise in each subset $\Omega_k^i\in\mathcal{P}_k$ is independent from the noise in all other $\Omega_k^j\in\mathcal{P}_k, i\neq j$.
Then the assembled mapping $\hat{f}_{\theta^\ast}^\mathcal{P}\coloneqq\frac{1}{K}\sum_{k=1}^K f_{\theta^\ast}^{\mathcal{P}_k}$ is $\tilde{\mathcal{P}}$-invariant with 
\[
\tilde{\mathcal P}
=
\left\{
\bigcap_{k=1}^K \Omega_k^j
\;:\;
\Omega_k^j \in \mathcal P_k
\right\}.
\]
\end{prop}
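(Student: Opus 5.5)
The plan is to reduce the statement to the elementary fact that $\mathcal{P}$-invariance passes to refinements and is preserved under linear combinations. First I would make precise what $f_{\theta^\ast}^{\mathcal P_k}$ means here. Following the assembly construction \eqref{eq:assembly}, it is the map
\[
f_{\theta^\ast}^{\mathcal P_k}(y)=\sum_{i=1}^{|\mathcal P_k|}\mathbf M_{\Omega_k^i}\,A\circ f_{\theta^\ast}\big((\mathcal B\circ g^{\Omega_k^{i^c}})(\mathbf M_{\Omega_k^{i^c}}y)\big),
\]
that is, the data-domain quantity appearing in the $k$-th term of $\mathcal L^Y$. By the argument given after \eqref{eq:assembly}, each $f_{\theta^\ast}^{\mathcal P_k}$ is $\mathcal P_k$-invariant. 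This step only needs the preprocessing $g^{\Omega_k^{i^c}}$ to act on $\mathbf M_{\Omega_k^{i^c}}y$ alone, which holds for both zero-filling and linear interpolation. The role of the minimization and independence hypotheses is interpretive rather than logical: by the Noise2Self argument \cite{batson2019noise2self}, for a $\mathcal P_k$-invariant map the expected self-supervised loss equals the supervised loss against the clean data $Ax$ plus a noise-variance constant. Hence $f_{\theta^\ast}$ is the denoiser that is optimal among such invariant maps, simultaneously for every $k$. I would state this as a remark and not use it for the invariance claim.

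Second, I would show that $\tilde{\mathcal P}$ is a partition that refines every $\mathcal P_k$, discarding empty intersections. Each index $(a,b)$ lies in exactly one $\Omega_k^{j_k}$ for each $k$, so it lies in exactly one cell $\bigcap_k\Omega_k^{j_k}$. It follows that the cells are disjoint and cover $[m]\times[n]$. Moreover, every cell $\tilde\Omega=\bigcap_k\Omega_k^{j_k}$ satisfies $\tilde\Omega\subset\Omega_k^{j_k}$ for each $k$.

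Third, I would prove a refinement lemma. If $h$ is $\Omega'$-invariant and $\tilde\Omega\subset\Omega'$, then $h$ is $\tilde\Omega$-invariant. The reason is that $\mathbf M_{\tilde\Omega}h(z)=\mathbf M_{\tilde\Omega}\mathbf M_{\Omega'}h(z)$ depends only on $\mathbf M_{\Omega'^c}z$. Since $\Omega'^c\subset\tilde\Omega^c$, the quantity $\mathbf M_{\Omega'^c}z=\mathbf M_{\Omega'^c}\mathbf M_{\tilde\Omega^c}z$ is a function of $\mathbf M_{\tilde\Omega^c}z$. Applying the lemma with $\Omega'=\Omega_k^{j_k}$ makes each $f_{\theta^\ast}^{\mathcal P_k}$ $\tilde\Omega$-invariant. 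Then
\[
\mathbf M_{\tilde\Omega}\hat f^{\mathcal P}_{\theta^\ast}(z)=\tfrac1K\sum_k\mathbf M_{\tilde\Omega}f_{\theta^\ast}^{\mathcal P_k}(z)
\]
is a sum of functions of $\mathbf M_{\tilde\Omega^c}z$, which gives $\tilde{\mathcal P}$-invariance.

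The main obstacle is not the algebra but the interpretation. I need to settle what $f^{\mathcal P_k}_{\theta^\ast}$ denotes, namely the data-domain assembly rather than an image-domain average, since the A-inf average is not invariant, as noted in Section~\ref{sec:inference}. I also need to decide how the expectation and independence hypotheses enter. I would handle this by explicitly separating the purely structural invariance claim, which holds for any $\theta$, from the statistical optimality statement that justifies using $\theta^\ast$.
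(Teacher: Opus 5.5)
Your proposal is correct and follows the same route as the paper: each $f_{\theta^\ast}^{\mathcal P_k}$ is $\mathcal P_k$-invariant by the assembly construction \eqref{eq:assembly}, every cell of $\tilde{\mathcal P}$ lies inside a cell of each $\mathcal P_k$, and averaging preserves invariance. Your refinement lemma is the direct form of the paper's argument by contradiction. The extra steps you add, namely defining $f_{\theta^\ast}^{\mathcal P_k}$ explicitly as a data-domain assembly, checking that $\tilde{\mathcal P}$ is a partition, and observing that the minimization and independence hypotheses play no logical role, make explicit what the paper leaves implicit.
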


\begin{proof}
    See supplementary.
\end{proof}
Proposition \ref{assemble} establishes that multi-partition training effectively enforces invariance with respect to the intersection of the underlying partitions, providing a theoretical explanation for its improved robustness under both correlated and uncorrelated noise observed in our experiments (see Section \ref{experiments}).
If we consider the case $K = 2$, where the two partitions correspond to angular and detector-wise splits ($\mathcal{P}_{\textrm{ang}}^s$ and $\mathcal{P}_{\textrm{det}}^s$) defined in Section~\ref{splitting}, Proposition \ref{assemble} shows that the partitioning of the invariant term in joint angular and detector splitting is identical to that in the pixel-wise regular masking scheme ($\mathcal{P}_{\textrm{lat}}^s$).

We now analyze the expected minimization error in the presence of scintillator-induced detector correlations modeled by a convolution with a one-dimensional kernel.

\begin{theorem}[Expected minimization error under angular splitting]\label{expected_error}
Let the forward model be
\begin{align}\label{forward_model_correlated}
    y = \mathcal A x + \eta,
\end{align}
where $\eta$ is zero-mean Gaussian noise which is independent across different projection angles but may be correlated within each angle.
Let $\mathcal P_{\textrm{ang}}$ be the angular partition defined above.
Then it holds that
\begin{align*}
&\sum_{\Omega_{\textrm{ang}} \in \mathcal P_{\textrm{ang}}}
\mathbb E
\Bigl\|
\mathbf M_{\Omega_{\textrm{ang}}}A
f_\theta\bigl((\mathcal B\circ g^{\Omega_{\textrm{ang}}^c})
\mathbf M_{\Omega_{\textrm{ang}}^c} y\bigr)
-
\mathbf M_{\Omega_{\textrm{ang}}} y
\Bigr\|_2^2
\\
&=
\sum_{\Omega_{\textrm{ang}} \in \mathcal P_{\textrm{ang}}}
\mathbb E
\Bigl\|
\mathbf M_{\Omega_{\textrm{ang}}}
A\circ f_\theta\bigl((B\circ g^{\Omega_{\textrm{ang}}^c})
\mathbf M_{\Omega_{\textrm{ang}}^c} y\bigr)
-
\mathbf M_{\Omega_{\textrm{ang}}} A x
\Bigr\|_2^2
\\
&\quad+
\sum_{\Omega_{\textrm{ang}} \in \mathcal P_{\textrm{ang}}}
\mathbb E
\|
\mathbf M_{\Omega_{\textrm{ang}}}(\eta)
\|_2^2 .
\end{align*}
\end{theorem}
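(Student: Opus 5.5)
Fix one block $\Omega_{\textrm{ang}}\in\mathcal P_{\textrm{ang}}$ and prove the identity for that summand; summing over the partition then gives the theorem. We use the forward model \eqref{forward_model_correlated} to write $\mathbf M_{\Omega_{\textrm{ang}}} y = \mathbf M_{\Omega_{\textrm{ang}}} A x + \mathbf M_{\Omega_{\textrm{ang}}}\eta$. Here $\mathcal A$ and $A$ are identified on the observed angles, and $\mathcal B$ is identified with $B$. Abbreviate the network prediction by
\[
u \coloneqq \mathbf M_{\Omega_{\textrm{ang}}} A\circ f_\theta\bigl((B\circ g^{\Omega_{\textrm{ang}}^c})\mathbf M_{\Omega_{\textrm{ang}}^c} y\bigr).
\]
Expanding the square gives
\[
\mathbb E\|u-\mathbf M_{\Omega_{\textrm{ang}}} y\|_2^2 = \mathbb E\|u-\mathbf M_{\Omega_{\textrm{ang}}} Ax\|_2^2 + \mathbb E\|\mathbf M_{\Omega_{\textrm{ang}}}\eta\|_2^2 - 2\,\mathbb E\langle u-\mathbf M_{\Omega_{\textrm{ang}}}Ax,\ \mathbf M_{\Omega_{\textrm{ang}}}\eta\rangle .
\]
The whole proof reduces to showing that the cross term vanishes.

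For the cross term, observe that $u$ is a deterministic function of $\mathbf M_{\Omega_{\textrm{ang}}^c} y = \mathbf M_{\Omega_{\textrm{ang}}^c}(Ax+\eta)$. The image $x$ is fixed, or, if $x$ is random, we condition on it and assume it is independent of $\eta$. So $u$ depends on the noise only through $\mathbf M_{\Omega_{\textrm{ang}}^c}\eta$, and $\mathbf M_{\Omega_{\textrm{ang}}}Ax$ is deterministic given $x$. Since $\Omega_{\textrm{ang}}$ consists of entire projection angles, $\Omega_{\textrm{ang}}$ and $\Omega_{\textrm{ang}}^c$ are unions of disjoint sets of full rows of the sinogram. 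By assumption, $\eta$ is Gaussian and independent across different angles, with arbitrary correlation only within a row. Hence $\mathbf M_{\Omega_{\textrm{ang}}}\eta$ and $\mathbf M_{\Omega_{\textrm{ang}}^c}\eta$ are independent: for jointly Gaussian vectors, zero cross-covariance between the row blocks gives full independence. By independence and $\mathbb E\eta=0$,
\[
\mathbb E\langle u-\mathbf M_{\Omega_{\textrm{ang}}}Ax,\mathbf M_{\Omega_{\textrm{ang}}}\eta\rangle = \bigl\langle \mathbb E[u-\mathbf M_{\Omega_{\textrm{ang}}}Ax],\ \mathbb E[\mathbf M_{\Omega_{\textrm{ang}}}\eta]\bigr\rangle = 0 .
\]
Summing the per-block identity over $\Omega_{\textrm{ang}}\in\mathcal P_{\textrm{ang}}$ then yields the claim.

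The main point to get right is this independence step. It relies on the fact that angular masking never splits a single projection, so the within-angle (scintillator) correlations never straddle the boundary between the input coordinates $\Omega_{\textrm{ang}}^c$ and the target coordinates $\Omega_{\textrm{ang}}$. We would remark that the same computation fails for lattice or detector splits, because correlated neighbouring detector entries fall on both sides of the mask.

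A minor technical point is integrability. We would assume $\mathbb E\|u\|_2^2<\infty$, for instance when $f_\theta$ has at most linear growth, as a ReLU U-Net does. This guarantees that all expectations are finite and the expansion is justified.
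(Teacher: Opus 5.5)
Your proposal is correct and follows the paper's proof essentially step by step. You fix one angular block, expand the square using $\mathbf M_{\Omega_{\textrm{ang}}}y=\mathbf M_{\Omega_{\textrm{ang}}}Ax+\mathbf M_{\Omega_{\textrm{ang}}}\eta$, and show the cross term vanishes: the prediction depends only on $\mathbf M_{\Omega_{\textrm{ang}}^c}\eta$, which is independent of the zero-mean $\mathbf M_{\Omega_{\textrm{ang}}}\eta$ because angular masks never split a projection; summing over the partition then gives the claim, exactly as in the paper. Your added remarks on joint Gaussianity, conditioning on $x$, the $\mathcal A$/$A$ identification, and square-integrability of the prediction are harmless refinements that the paper leaves implicit.
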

\begin{proof}
    See supplementary.
\end{proof}
The theorem shows that under angular splitting the expected objective
decomposes into a reconstruction term and a noise energy term. This implies that, under angular splitting, intra-angle detector correlations contribute only through a noise-energy term that is independent of the network parameters. Consequently, the optimization objective is not directly affected by the detailed correlation structure of the noise.

\begin{figure*}
    \centering
    \includegraphics[width=0.9\linewidth]{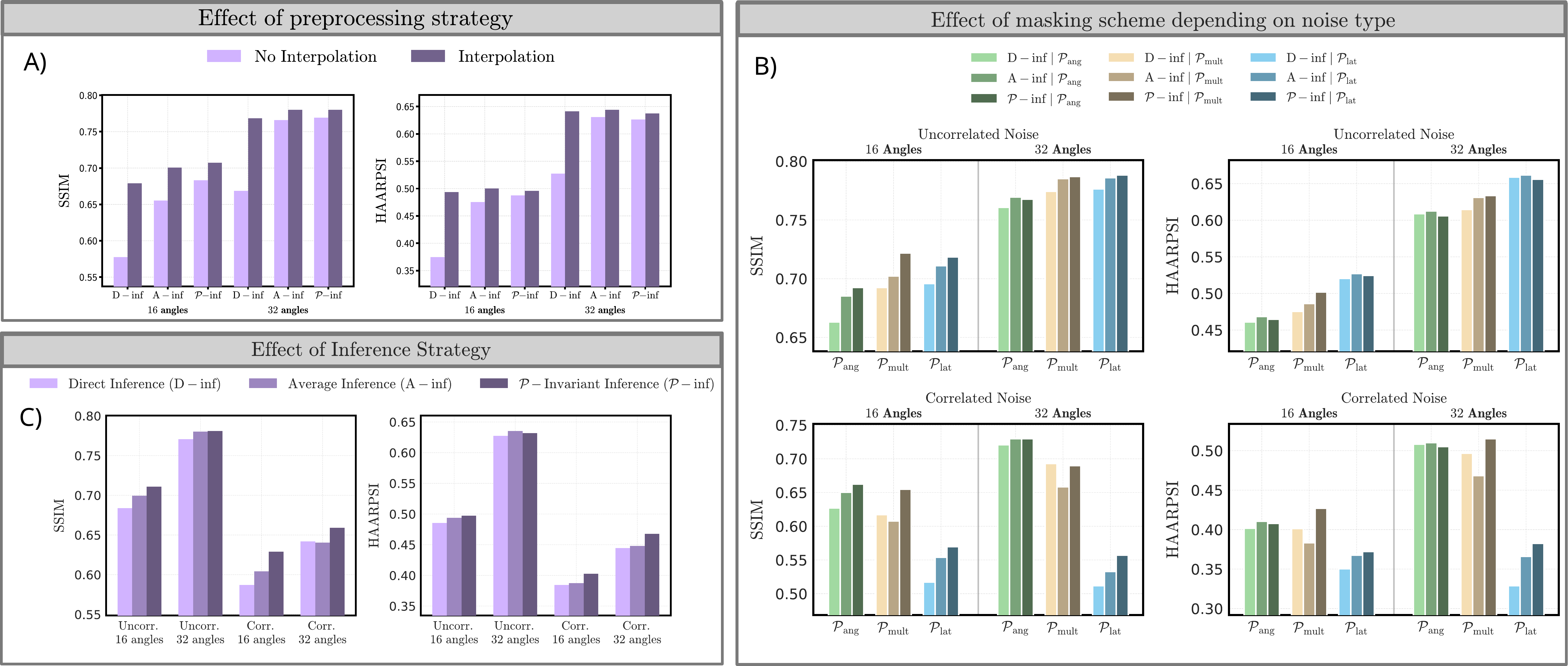} 
    \caption{\textbf{Summary of the main experimental findings on synthetic data.}
(A) Effect of preprocessing: interpolation of masked measurements prior to FBP improves SSIM and HaarPSI compared to zero-filling. 
(B) Influence of splitting strategy under uncorrelated and correlated noise regimes, evaluated in terms of SSIM and HaarPSI. 
(C) Comparison of inference strategies (D-inf, A-inf, and $\mathcal{P}$-inf).}
    \label{highlights_results}
\end{figure*}
\section{Experiments}\label{experiments}

We evaluate all methods on the test set of the LoDoPaB-CT dataset~\cite{leuschner2021lodopab}, a standard benchmark for sparse-view CT reconstruction, and on the real-world 2DeteCT dataset \cite{kiss20232detect}. For LoDoPaB-CT, the dataset is split into 1000 training images, 184 validation images, and 400 test images. Reconstructions are performed using the same U-Net architecture as in~\cite{gruber2024sparse2inverse} and \cite{hendriksen2020noise2inverse}. The provided reconstructions are rescaled to a spatial resolution of $336 \times 336$ and normalized such that their intensity values lie in the range $[0,1]$. The projection geometry consists of uniformly distributed angles in $[0,\pi]$. All images are multiplied by a circular mask, and all evaluation metrics are computed within the masked region.

To assess performance under different noise characteristics, we consider both uncorrelated and correlated measurement noise on LoDoPaB-CT. For the uncorrelated setting, projection data are corrupted with Poisson noise corresponding to photon counts of 3500 and 6000, following the procedure of~\cite{hendriksen2020noise2inverse}. To simulate correlated noise arising from scintillator blur, we add Gaussian noise with standard deviation $\sigma=0.1$ and subsequently convolve it along the detector direction using a kernel with bandwidth $l=5$.

For the 2DeteCT dataset, we use the mode-1 high-noise acquisitions. The data are split into 100 training, 50 validation, and 100 test slices. Each original scan contains 3600 projection angles and 1912 detector bins. For our sparse-view reconstruction setting, we downsample the detector direction by a factor of two, resulting in 956 detector bins, and uniformly subsample the angular measurements to 100 projection views.

Forward projections and filtered backprojection are implemented using tomosipo~\cite{hendriksen-2021-tomos} and LION \footnote{https://github.com/CambridgeCIA/LION}, respectively. All neural networks are implemented and trained in PyTorch \cite{paszke2019pytorch}. We use the Adam optimizer \cite{kingma2014adam} with a learning rate of $10^{-4}$ and mixed-precision training via PyTorch’s \texttt{GradScaler}.
During training, model checkpoints are evaluated on the validation set using all three inference strategies independently. For each strategy, we track and select the specific network weights that yield the highest validation SSIM, resulting in three distinct optimized weight configurations (one for D-inf, one for A-inf, and one for $\mathcal{P}$-inf). These individual weight sets are then used for the final evaluations on the test set.  We observe that lattice-based splitting ($\mathcal{P}_{\textrm{lat}}$) converges faster than angular and joint splitting schemes. Accordingly, lattice-based models are trained for 6000 epochs, while angular ($\mathcal{P}_{\textrm{ang}}$) and joint angular–detector ($\mathcal{P}_{\textrm{mult}}$) models are trained for 10{,}000 epochs.
A description of all evaluated configurations and used abbreviations is provided in Table~\ref{abbreviations}.
The source code, data, and experiment configurations are publicly available at the project repository 
(\href{https://github.com/Nadja1611/Design-Choices-in-Splitting-Based-Self-Supervised-Sparse-View-CT-Reconstruction}{https://github.com/Nadja1611/Design-Choices-in-Splitting-Based-Self-Supervised-Sparse-View-CT-Reconstruction}).

\subsection*{Evaluation metrics}
We report Peak Signal-to-Noise Ratio (PSNR), Structural Similarity Index Measure (SSIM), Learned Perceptual Image Patch Similarity (LPIPS)~\cite{zhang2018unreasonable}, and HaarPSI~\cite{reisenhofer2018haar}. While PSNR and SSIM are widely used, they do not always capture perceptual or structurally relevant differences in reconstruction quality. We therefore additionally report LPIPS and HaarPSI, which provide complementary perceptual and structural measures based on deep feature representations and localized wavelet analysis, respectively. Lower LPIPS values indicate higher perceptual similarity, while higher HaarPSI values indicate better structural fidelity.
For the 2DeteCT experiments, the reference images were obtained from reconstructions of the full mode-2 data. To compensate for possible global intensity-scale differences between these reference reconstructions and the sparse-view predictions, we applied a scalar calibration before metric computation. For each test slice, a multiplicative factor was selected from a fixed range $[0.3, 2.5]$ by maximizing the PSNR with respect to the reference image. All reported quantitative metrics on 2DeteCT were then computed using the corresponding scaled reconstruction.

Having established the experimental framework, we now systematically isolate and analyze the individual components of our unified splitting architecture introduced in Section~\ref{sec:method}. 

\begin{table*}[t]
\centering
\small
\begin{tabularx}{\textwidth}{ll>{\raggedright\arraybackslash}X}
\toprule\toprule
\textbf{Method} & \textbf{Splitting} & \textbf{Configuration and total splits $s$} \\
\midrule

$\mathcal{P}_\textrm{lat}^d$ &
Lattice &
$d \in \{2,3,4\}$, with $s=d^2 \in \{4,9,16\}$.
One pixel is selected per $2\times2$, $3\times3$, or $4\times4$ patch. \\

$\mathcal{P}_\textrm{ang}^d$ &
Angular &
$d \in \{2,3,4\}$, with $s=d^2 \in \{4,9,16\}$.
One projection angle is masked in each block of 4, 9, or 16 consecutive angles. \\

$\mathcal{P}_\textrm{mult}^d$ &
Angular--detector &
$d \in \{2,3,4\}$, with $s=d^2 \in \{4,9,16\}$ per direction.
One projection angle and one detector row are masked in each block of 4, 9, or 16. \\

\bottomrule
\end{tabularx}
\vspace{0.1cm}
\caption{Overview of the abbreviations and partition indices used for the benchmarked splitting approaches.}
\label{abbreviations}
\end{table*}
\section*{Synthetic LoDoPaB-CT results}
\subsection{Effect of the preprocessing strategy}
As described in Section~\ref{sec:preprocessing}, preprocessing either sets masked values to zero or replaces them via linear interpolation before FBP, producing the network input. For lattice masking schemes, this is given by
\[
\mathbf{M}_\Omega (y \ast \kappa) + \mathbf{M}_{\Omega^c} y,
\]
with convolution kernels 
\[
\kappa_\textrm{lat} = \frac{1}{4}
\begin{bmatrix}
0 & 1 & 0 \\
1 & 0 & 1 \\
0 & 1 & 0
\end{bmatrix}, \quad
\kappa_{\textrm{ang}} = \frac{1}{2}[1 \ 0 \ 1], \quad
\kappa_{\textrm{det}} = \frac{1}{2}[1 \ 0 \ 1]^T.
\]

For the multi-partition splitting scheme, only the interpolation-based preprocessing is considered; the zero-masking variant is not used. Masked detector rows and projection angles are replaced by linear interpolation using $\kappa_\mathrm{det}$ and $\kappa_\mathrm{ang}$, respectively.

Figure~\ref{highlights_results}A) shows that interpolation prior to FBP consistently improves reconstruction quality, particularly for direct inference (D-inf) settings. While intuitive, this is noteworthy because the original Sparse2Inverse approach~\cite{gruber2024sparse2inverse} does not employ interpolation, leaving masked values 0. This effect is visually apparent in Figure~\ref{interpolation} (supplementary), where omitting interpolation leads to severe undersampling artifacts. 

To provide a consolidated view of this architectural component in Figure~\ref{highlights_results} A), the reported SSIM and HaarPSI scores are obtained using the uncorrelated Poisson noise regime at a photon count of 6000, averaged across all evaluated splitting configurations for both $s=4$ and $s=9$ total splits. The exact quantitative metrics across all individual parameters are documented in Table~\ref{tab:uncorrelated_6000} in the supplementary material. We observe that while direct inference benefits most drastically from preprocessing, a visible performance gain is shared across all evaluated splitting architectures and inference strategies.

\subsection{Influence of splitting scheme}
The impact of the splitting strategy differs substantially between the uncorrelated and correlated noise regimes.

\paragraph{Influence of splitting scheme in the Poisson (uncorrelated) noise case}
In the uncorrelated noise setting, we compare the masking strategies $\mathcal{P}_{\mathrm{lat}}$ and $\mathcal{P}_{\mathrm{ang}}$ and observe that lattice-based masking $\mathcal{P}_{\mathrm{lat}}$ consistently outperforms angular masking across all evaluated configurations. This trend is illustrated in Fig.~\ref{highlights_results}(B, top row), which reports the average SSIM and HaarPSI over the three splitting factors (4, 9, and 16). Corresponding PSNR and LPIPS results are shown in Fig.~\ref{lpips_psnr_split} (top row).

Among the considered metrics, LPIPS indicates a slight advantage of multipartition splitting over both lattice and angular masking in the 16-split setting. In contrast, SSIM shows comparable performance between multipartition and lattice splitting, while HaarPSI consistently favors lattice splitting. Since HaarPSI is designed to capture perceptual structural similarity, these results suggest that lattice splitting provides the most reliable performance in the uncorrelated noise regime. 

Interestingly, all evaluated metrics consistently indicate that multipartition splitting ($\mathcal{P}_{\textrm{mult}}$) yields a systematic improvement over pure angular masking ($\mathcal{P}_{\textrm{ang}}$). This finding is particularly noteworthy as it demonstrates that augmenting the angular splitting mechanism with concurrent detector-wise masking provides a powerful regularizing effect, successfully exploiting additional structural measurement redundancies while preserving the ability to reduce sparse-view artifacts.

These observations are further supported by the visual reconstructions in Fig.~\ref{reco_32_6000_uncorr} (32 angles, photon count 6000) and in Figs.~\ref{reco_16_6000_uncorr}, \ref{reco_16_3500_uncorr}, and \ref{reco_32_3500_uncorr} in the supplementary material. Detailed quantitative results are reported in Tables~\ref{tab:uncorrelated_3500} and~\ref{tab:uncorrelated_6000} in the supplementary.

\begin{figure*}
    \centering
    \includegraphics[width=0.99\linewidth]{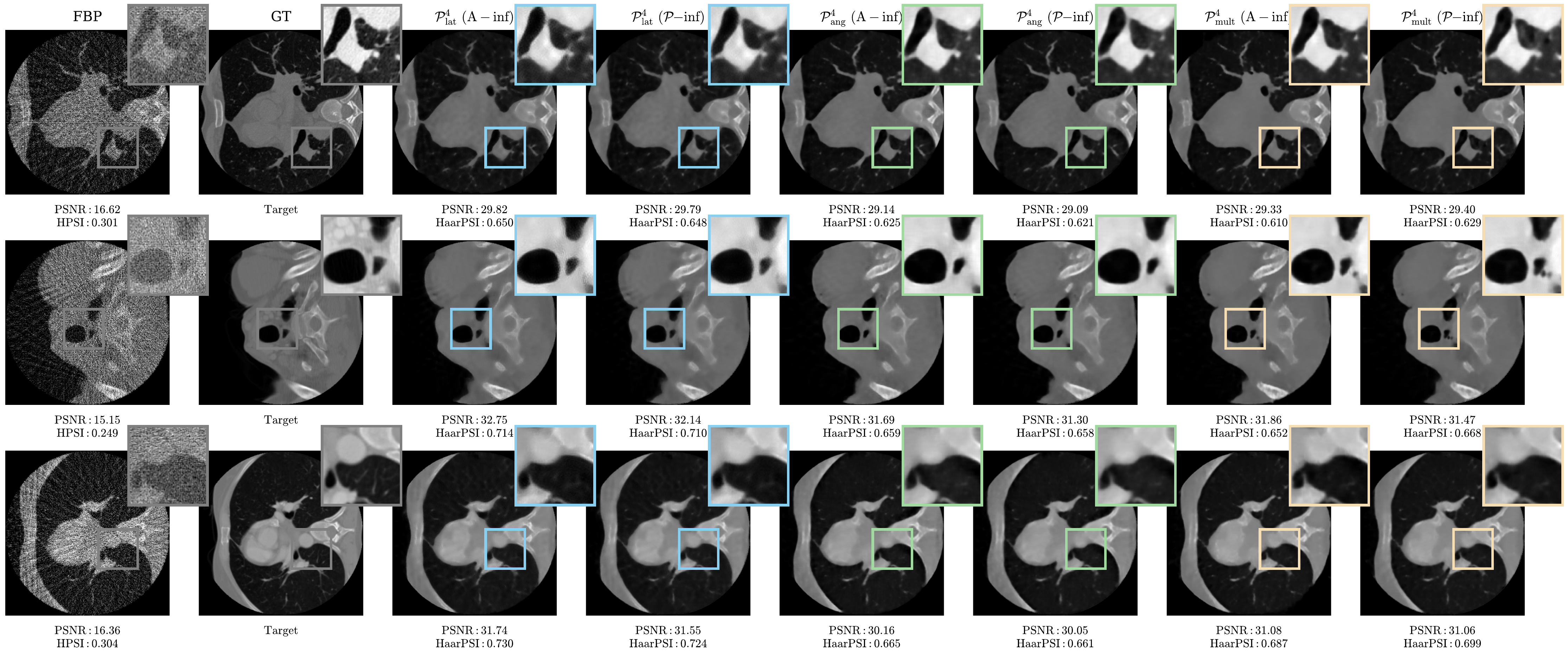}
\caption{Reconstruction results for three examples from the LoDoPaB-CT dataset using 32 projection angles and Poisson noise with a photon count of 6000. For each masking scheme, the displayed reconstructions correspond to the best-performing configuration reported in Table~\ref{tab:uncorrelated_6000}. Columns 3-4 show results for $\mathcal{P}{\textrm{lat}}^4$ (irregular splitting), columns 5-6 for $\mathcal{P}{\textrm{ang}}^3$ (irregular splititng), and columns 7--8 for $\mathcal{P}_{\textrm{mult}}^2$. All configurations use irregular masking during both training and inference. The best-performing inference strategy, either A-inf or $\mathcal{P}$-inf depending on the configuration, is shown.}
    \label{reco_32_6000_uncorr}
\end{figure*}

\paragraph{Influence of splitting scheme in the correlated noise case}

In the correlated noise regime, the relative behavior of the splitting schemes changes substantially. Angular splitting clearly outperforms lattice splitting across all metrics. The multipartition strategy further improves performance over lattice masking and, in the highly sparse setting with 16 angles, even surpasses angular splitting under the invariant inference configuration (Fig.~\ref{highlights_results}(B, bottom row), Fig.~\ref{lpips_psnr_split} supplementary (bottom row)).

These findings are consistent with the qualitative reconstructions in Fig.~\ref{reco_32_corr}, where lattice-based reconstructions remain visibly contaminated by noise, whereas both angular and multipartition splitting effectively suppress structured noise artifacts and yield cleaner reconstructions (see Table \ref{tab:correlated} in supplementary).

\subsection{Influence of the number of splits}
We evaluate three splitting configurations corresponding to $s \in \{4, 9, 16\}$ number of splits. As summarized in Table \ref{abbreviations}, for lattice masking, these parameters correspond to a patch size of $2\times2$, $3\times3$, and $4\times4$ pixels, respectively. For angular and multi-partition splitting, this setup corresponds to retaining one out of every 4, 9, or 16 consecutive projections (and detector rows).

\paragraph{Number of splits in the Poisson noise case}
Figure~\ref{gridsizes} (top row) illustrates the effect of the number of splits on all four performance metrics under $\mathcal{P}$-inf inference. Across all splitting frameworks, configurations utilizing 9 and 16 partitions generally achieve a slightly higher reconstruction performance than the coarser 4-partition setting. Overall, however, the direct influence of the masking density on reconstruction quality remains minor. This stable behavior is highly consistent across all evaluation metrics and experimental sub-settings. Detailed numerical results are provided in Tables~\ref{tab:uncorrelated_3500} and~\ref{tab:uncorrelated_6000} of the supplementary material.

\paragraph{Number of splits in the correlated noise case}
In the correlated noise regime, a distinct trend is observed. As shown in Fig.~\ref{gridsizes} (bottom row), reducing the number of splits (i.e., selecting $s=4$) yields a slight performance improvement for both lattice and multipartition masking. Conversely, for angular splitting, the reconstruction metrics remain largely stationary across the different numbers of splits, demonstrating that this strategy is highly robust to parameter variations. This insensitivity is expected, as the angular decoupling is inherently designed to bypass horizontal scintillator blur effects regardless of the number of splits. Quantitative results are documented in Table~\ref{tab:correlated} of the supplementary material.

\subsection{Influence of inference strategy:}
We perform our experiments using the three inference strategies
D-inf, A-inf and $\mathcal{P}$-inf as described in Section \ref{sec:inference}. For $\mathcal{P}$-inf we apply the full forward operator to the network output. In this case, the number of projection angles is $M = n \cdot \pi/2$, where $n$ denotes the image height and width (336 in our case), and $m$ is the number of projection angles available in the sparse CT problem. 

\paragraph{Influence of inference strategy in the Poisson noise case}
The benefits of A-inf and $\mathcal{P}$-inf over D-inf across all splitting strategies are illustrated in Fig.~\ref{highlights_results} (C, green line). Both SSIM and HaarPSI indicate a consistent, albeit modest, improvement when comparing direct inference to average and $\mathcal{P}$-invariant inference.
In this figure, results are averaged over all three splitting strategies and all three numbers of splits (4, 9, and 16), providing an overall view of the impact of the inference strategy. A more detailed breakdown in Tables~\ref{tab:uncorrelated_3500} and~\ref{tab:uncorrelated_6000} in the supplementary material shows that this trend is consistent across all splitting schemes and all numbers of splits.
This effect is particularly relevant for lattice masking, as its original formulation in~\cite{unal2021self} was introduced specifically in combination with D-inf inference.

\paragraph{Influence of inference strategy in the correlated noise case}
In the correlated noise regime, similar trends are observed, with a more pronounced effect than in the Poisson noise case (Fig.~\ref{highlights_results}(C, brown)). In particular, invariant inference consistently yields improved performance compared to A-inf, and more significantly compared to D-inf.
Detailed results for each splitting scheme are provided in Table~\ref{tab:correlated} in the supplementary material. The improvement is most pronounced for the lattice masking strategy.

\subsubsection{Regular vs. irregular splitting}
To determine the configurations for our evaluation of irregular splitting, we first identify the optimal number of splits for each individual partitioning method based on the highest mean validation SSIM (averaged over the three inference strategies); irregular training masks are then deployed specifically for these top-performing splitting factors. We evaluate whether these \emph{irregular masking schemes} can improve reconstruction quality, drawing on the well-established role of randomized partitioning as a regularization mechanism to enhance generalization performance in machine learning. To the best of our knowledge, however, irregular training masks have not yet been explored in the context of Proj2Proj~\cite{unal2021self}, Noise2Inverse~\cite{hendriksen2020noise2inverse}, or Sparse2Inverse~\cite{gruber2024sparse2inverse}. 

For each splitting-based method, we compare two distinct configurations:
\begin{itemize}
\item regular splitting during both training and inference,
\item irregular splitting during training paired with regular splitting during inference.
\end{itemize}

\paragraph{Effect of irregular splitting in the Poisson noise case}
The results are shown in Fig.~\ref{metrics_structured_vs_unstructured} (top row) of the supplementary material. In this experiment, performance metrics are directly compared between the best-performing regular splitting variants—specifically lattice ($\mathcal{P}_{\textrm{lat}}^2$ and $\mathcal{P}_{\textrm{lat}}^4$ for 16 and 32 angles, respectively) and angular ($\mathcal{P}_{\textrm{ang}}^3$ and $\mathcal{P}_{\textrm{ang}}^4$ for 16 and 32 angles, respectively)—and their respective irregular counterparts.

We observe a small but consistent performance improvement when irregular masks are deployed during the training phase. While the absolute numerical gains are modest, they remain consistently present across all three evaluated inference strategies. This systematic trend suggests that introducing stochastic randomness into the sinogram partitioning process acts as an effective regularizer, noticeably improving the generalization capability and robustness of the learned reconstruction mapping. The corresponding detailed quantitative results are documented in Tables~\ref{tab:uncorrelated_3500} and~\ref{tab:uncorrelated_6000} of the supplementary material.

\paragraph{Effect of irregular splitting in the correlated noise case}
Similar trends are observed in the correlated noise regime. Specifically, the results shown in Fig.~\ref{metrics_structured_vs_unstructured} (bottom row) compare the best-performing configurations under correlated noise: lattice ($\mathcal{P}_{\textrm{lat}}^2$ for both 16 and 32 angles) versus angular masking ($\mathcal{P}_{\textrm{ang}}^2$ and $\mathcal{P}_{\textrm{ang}}^3$ for 16 and 32 angles, respectively) against their irregular variants. 

In this scenario, the regularizing impact of irregular splitting is even more pronounced, yielding a larger relative improvement over structured splitting compared to the Poisson noise case. The corresponding quantitative results are reported in Table~\ref{tab:correlated} of the supplementary material, where the performance boost provided by stochastically randomized masks is most evident within the lattice masking framework.

\section*{Real-world evaluation on 2DeteCT}

As we figured out that the number of splits has no relevant influence on the reconstruction results on the synthetic data, we set $d=3$ on the 2DeteCT dataset.
The quantitative results summarized in Figure~\ref{highlights_results_2detect} and Table~\ref{tab:splitting_results_2detect} in the supplementary show that several key trends from the synthetic experiments also transfer to real measured data. In particular, averaging-based inference and $\mathcal{P}$-invariant inference generally improve over direct inference, especially in terms of PSNR and HaarPSI. This indicates that exploiting multiple masked reconstructions at test time remains beneficial also under realistic acquisition conditions. Moreover, angular splitting achieves the best PSNR and SSIM values, with the irregular interpolated configuration $\mathcal{P}_{\mathrm{ang}}^{3}$ obtaining the strongest overall performance in these metrics. This supports the observation that angular splitting is well suited to realistic noisy measurements, where local detector-neighborhood dependencies may reduce the effectiveness of lattice-based masking.

The effect of interpolation is less uniform than in the synthetic LoDoPaB-CT experiments. While interpolation improves several configurations and confirms the general preprocessing trend observed in simulation, zero-filled inputs remain competitive in some cases and can yield better values for individual metrics such as SSIM or LPIPS. This reduced benefit may be related to physical effects present in the 2DeteCT measurements that are not captured by the synthetic noise model, or to the less severely undersampled acquisition setting with 100 projection views. Therefore, the interpolation results on 2DeteCT should be interpreted with caution.

The representative reconstructions in Figure~\ref{reco_2detect} provide a visual interpretation of these quantitative findings. Sparse2Inverse removes noise effectively, but tends to oversmooth the reconstructions, leading to a loss of fine structural detail. In contrast, lattice and multi-partition splitting show visually similar behaviour: both suppress noise less aggressively, but preserve local image structures more strongly. This detail preservation may explain why lattice and multi-partition configurations achieve competitive or higher HaarPSI values in some cases, despite not always obtaining the best PSNR or SSIM. Overall, the visual results confirm that different splitting strategies induce different trade-offs between denoising strength and structural detail preservation on real measured CT data.

\begin{figure*}
    \centering
    \includegraphics[width=0.99\linewidth]{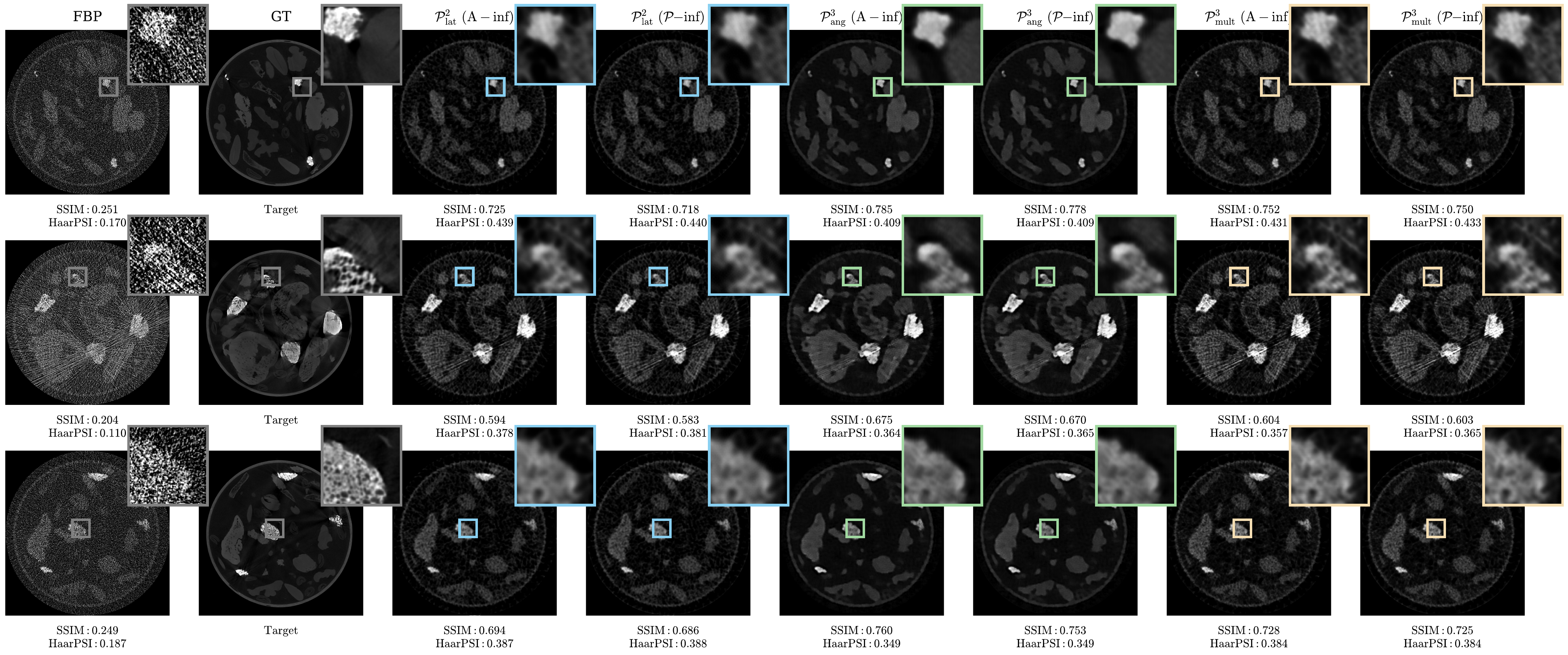}
\caption{\textbf{Reconstruction results on three examples of the 2DeteCT dataset for 100 projection angles in the mode-1 high-noise setting.}
For each splitting scheme, the shown reconstructions correspond to the best-performing configuration highlighted in Table~\ref{tab:splitting_results_2detect} in the supplementary. columns 2-3 show results for $\mathcal{P}_{\textrm{lat}}^3$ with regular splitting, columns 4-5 show $\mathcal{P}_{\textrm{ang}}^3$ with irregular splitting, and the last two columns show $\mathcal{P}_{\textrm{mult}}^3$. For each splitting strategy, the two displayed reconstructions correspond to the best-performing inference strategies, A-inf and $\mathcal{P}$-inf.}
    \label{reco_2detect}
\end{figure*}

\begin{figure}
    \centering
    \includegraphics[width=0.99\linewidth]{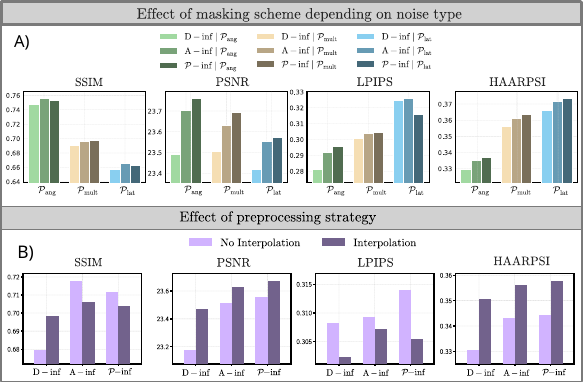} 
    \caption{\textbf{Summary of the main experimental findings on 2DeteCT dataset.}
(A) Influence of splitting strategy on real 2DeteCT mode-1 high-noise data, evaluated in terms of SSIM, PSNR, LPIPS and HaarPSI. (B) Effect of preprocessing: interpolation of masked measurements prior to FBP improves PSNR, LPIPS and HaarPSI, while in terms of SSIM the interpolation is only helpful for direct inference (note that here the average across all three splitting approaches is depicted). }
    \label{highlights_results_2detect}
\end{figure}

\section{Discussion}\label{sec:discussion}
We systematically benchmarked splitting-based self-supervised reconstruction methods for sparse-view CT, isolating the distinct effects of preprocessing, inference strategy, splitting design, number of splits, and noise
characteristics. Experiments on synthetic LoDoPaB-CT data allow controlled analysis under independent and correlated noise models, while the real
2DeteCT experiments assess whether the resulting trends transfer to measured
CT data. To ensure a comprehensive assessment, we evaluate reconstruction
quality using a set of complementary image quality metrics. Following recent recommendations in medical imaging computing, we complement PSNR and SSIM with perceptual and structure-sensitive metrics such as LPIPS and HaarPSI.
These metrics provide additional perspectives on reconstruction quality, particularly for artifacts and structural differences that may not be fully reflected by error-based scores alone. 

\paragraph{Choice of Splitting Strategy}
Our results show that the optimal splitting strategy depends fundamentally on
the underlying noise statistics of the measurements. For spatially
uncorrelated Poisson noise, lattice splitting consistently achieves the best
structural and perceptual performance in terms of HaarPSI, alongside
competitive results in SSIM and LPIPS. This demonstrates that local spatial
prediction, or pixel-wise conditioning, is well aligned with independent noise
assumptions and can effectively exploit local sinogram redundancy.

In contrast, under correlated noise along the detector array, lattice
splitting no longer provides the best performance, since neighboring masked
and unmasked detector elements remain statistically coupled. In this regime,
angular splitting becomes clearly superior. By withholding entire projection
angles, angular partitioning reduces the influence of detector-direction
correlations on the self-supervised loss and better preserves the independence
assumptions required by data splitting. In extremely sparse settings with 16
projection views, our proposed multi-partition splitting
($\mathcal{P}_{\textrm{mult}}^s$) provides additional structural gains,
indicating that combining angular and detector-direction partitioning can be
beneficial when measurement density is severely limited.

The real 2DeteCT experiments support this general trend, but also show that
the transfer from synthetic noise models to measured data is not exact.
Angular splitting achieves the strongest overall PSNR and SSIM on 2DeteCT,
consistent with the observation that view-wise splitting is more robust when
local detector-neighborhood dependencies are present. However, lattice and
multi-partition splitting remain competitive in HaarPSI for some
configurations, suggesting that they may preserve local structural detail
better even when they do not achieve the best error-based scores.

\paragraph{Choice of Inference Strategy}
Across all evaluated settings, the choice of inference strategy reveals a
consistent trend: both averaging-based inference (A-inf) and our proposed
$\mathcal{P}$-invariant inference ($\mathcal{P}$-inf) outperform standard
direct inference (D-inf). The performance gains are stable across noise
regimes, splitting types, sparsity levels, and also transfer to the real
2DeteCT experiments. This indicates that aggregating multiple masked
predictions and preserving the $\mathcal{P}$-invariance structure imposed
during training are beneficial compared with simple single-pass network
testing. In practice, D-inf should therefore be avoided when computational
resources permit, while $\mathcal{P}$-inf provides the most principled
inference strategy due to its closer alignment with the training objective.

\paragraph{Choice of Preprocessing}
Linear interpolation of masked sinogram elements prior to filtered
backprojection consistently improves reconstruction quality on the synthetic
LoDoPaB-CT experiments. This effect is particularly pronounced for angular
splitting and in highly undersampled settings, where zero-filled sinograms
produce severe FBP artifacts before the data are passed to the network.
Notably, this challenges earlier self-supervised splitting approaches such as
Sparse2Inverse~\cite{gruber2024sparse2inverse}, which rely on raw zero-filling
or data restriction. Our findings indicate that simple directional
interpolation can provide a more continuous and informative network input by
reducing severe undersampling artifacts before they enter the learned
reconstruction space.

On the real 2DeteCT data, however, the effect of preprocessing is more
method-dependent. As shown in Table~\ref{tab:splitting_results_2detect}, interpolation improves
the angular splitting configuration, but does not yield a consistent gain for
lattice-based Proj2Proj-style splitting or for the multi-partition strategy.
This suggests that interpolation is most beneficial when the mask removes
entire projection angles, whereas for lattice and multi-partition masks the
missing samples are distributed more locally and zero-filling can remain
competitive. The weaker and less uniform interpolation effect on 2DeteCT also
indicates that preprocessing choices may depend on real acquisition effects
that are not fully captured by the synthetic noise models.

\paragraph{Choice of Splitting Density}
The total number of splits $s$ exerts a secondary, comparatively limited impact on overall performance. In Poisson noise settings, higher partitioning densities ($d = 3, 9$ splits and $d=4, 16$ splits) yield minor improvements. In correlated noise regimes, however, lattice-based splitting schemes show a slight preference for lower densities ($s=4, 9$ splits). Overall, these results suggest that splitting density plays a secondary role compared with the choice of splitting geometry and its compatibility with the noise structure.

\paragraph{Summary: Practical Selection Rule}
Based on the compilation of our experimental evidence, we establish the following configuration guidelines for self-supervised sparse-view CT reconstruction:
\begin{itemize}
    \item \textbf{Under Uncorrelated Noise:} Irregular lattice splitting ($\mathcal{P}_{\textrm{lat}}^s$) is recommended to effectively exploit dense local pixel neighborhoods.
    \item \textbf{Under synthetic correlated noise and on real 2DeteCT data:} Angular splitting ($\mathcal{P}_{\textrm{ang}}^d$) reduces the effect of detector-direction correlations on the self-supervised objective. For ultra-sparse view regimes (synthetic data with 16 angles), joint multi-partition splitting ($\mathcal{P}_{\textrm{mult}}^d$) provides the highest structural fidelity.
    \item \textbf{Multi-Partition Splitting:} In scenarios where angular-based frameworks are utilized under uncorrelated noise conditions, transitioning to the multi-partition variant ($\mathcal{P}_{\textrm{mult}}^d$) is beneficial, as the concurrent detector-direction masking systematically improves upon the performance of pure angular splitting.
\item \textbf{Preprocessing:} On synthetic datasets with highly sparse sampling (16--32 projection angles), linear interpolation prior to FBP consistently improves reconstruction quality compared to zero-filling and restriction masks. On the real 2DeteCT dataset, this trend is less pronounced, with lattice and multi-partition splitting showing comparable or slightly better metrics without interpolation. This discrepancy may be attributable to the higher number of projections (100 vs. 16 and 32), although other real-world acquisition characteristics may also contribute.
    \item \textbf{Inference:} Ensembling pipelines (A-inf or $\mathcal{P}$-inf) are superior to direct inference (D-inf) across all settings as they stabilize structural predictions and enforce strict measurement consistency.
    \item \textit{Number of Splits:} Setting the number of splits $s$ between $9 \le s \le 16$ is optimal for lattice/Poisson configurations, while the split density acts as a low-priority tuning parameter in angular noise regimes.
\end{itemize}
\paragraph{Limitations}
Two main limitations remain. First, all experiments are restricted to
two-dimensional CT reconstruction. Although we evaluate both controlled
parallel-beam simulations on LoDoPaB-CT and real fan-beam measurements from
the 2DeteCT dataset, the proposed design guidelines may not transfer
directly to fully three-dimensional CT. In 3D, cone-beam geometry, helical
trajectories, larger detector correlations, and substantially different
memory constraints may affect the relative behavior of partitioning,
preprocessing, and inference strategies.

Second, model selection in our benchmark relies on reference-based validation
metrics, in particular validation SSIM. This is appropriate for a controlled
comparison of design choices, but it is not fully self-supervised in the
strict deployment sense, since such reference images are typically not
available in practical sparse-view CT settings. Future work should therefore
investigate ground-truth-free stopping criteria and model-selection
strategies, for example based on held-out measurement consistency,
self-supervised validation losses, or stability across complementary data
splits.

\paragraph{Outlook}
Future research directions will focus on translating this unified framework onto more real-world clinical and industrial CT datasets to confirm our design guidelines under complex, un-modeled hardware imperfections and polychromatic beam effects. Furthermore, we intend to explore a wider continuum of angular sampling densities to chart exactly where the benefits of self-supervised decoupling saturate.
Another promising frontier is the development of fully adaptive or learned splitting mechanisms that dynamically discover optimal partitioning boundaries directly from the noise print of the sinogram, alongside self-contained stopping criteria that operate independent of ground-truth validation sets. Finally, porting these deconstructed splitting modules to adjacent ill-posed inverse problems, such as sparse accelerated Magnetic Resonance Imaging (MRI) and Positron Emission Tomography (PET), represents a natural extension of this work.


\bibliographystyle{plain}
\bibliography{main}

\newpage
\onecolumn
\section{Supplement}

\begin{proof}[Proof of Proposition \ref{assemble}]
First note that the functions $f_{\theta^\ast}^{\mathcal{P}_k}$ are $\mathcal P_k$-invariant via \eqref{eq:assembly}.  
Now consider an element of the intersection
\[
\tilde{\Omega}
=
\bigcap_{k=1}^K \Omega_k^j,
\qquad
\Omega_k^j \in \mathcal P_k.
\]
By construction,
\[
\tilde{\Omega} \subseteq \Omega_k^j
\quad \text{for all } k=1,\dots,K.
\]
Suppose, for contradiction, that the prediction of $\hat{f}_{\theta^\ast}^\mathcal{P}$ on $\tilde{\Omega}$ depends on values of $y$ inside $\tilde{\Omega}$.  
Since $\tilde{\Omega} \subseteq \Omega_k$ for every $k$, this would imply that for some $k$ the prediction on $\Omega_k$ with $f_{\theta^\ast}^{\mathcal{P}_k}$ depends on values of $y$ inside $\Omega_k$, contradicting $\mathcal P_k$-invariance of $f_{\theta^\ast}^{\mathcal{P}_k}$. Hence $\hat{f}_{\theta^\ast}^\mathcal{P}$ is invariant with respect to the partition $\tilde{\mathcal P}$.
\end{proof}

\begin{proof}[Proof of Theorem \ref{expected_error}]
As a first step, we fix $\Omega_{\text{ang}} \in \mathcal P_{\text{ang}}$ and define
\[
y_{\Omega_{\text{ang}}}
:=
\mathbf M_{\Omega_{\text{ang}}}
A\circ f_\theta\bigl(( B\circ g)^{\Omega_{\text{ang}}^c}
\mathbf M_{\Omega_{\text{ang}}^c} y\bigr).
\]
Using \eqref{forward_model_correlated} we get
\[
\mathbf M_{\Omega_{\text{ang}}} y
=
\mathbf M_{\Omega_{\text{ang}}} A x
+
\mathbf M_{\Omega_{\text{ang}}}(\eta),
\]
and hence,
\[
y_{\Omega_{\text{ang}}}
-
\mathbf M_{\Omega_{\text{ang}}} y
=
y_{\Omega_{\text{ang}}}
-
\mathbf M_{\Omega_{\text{ang}}} A x
-
\mathbf M_{\Omega_{\text{ang}}}(\eta).
\]
Expanding the squared norm one obtains
\begin{align*}
&
\bigl\|
y_{\Omega_{\text{ang}}}
-
\mathbf M_{\Omega_{\text{ang}}} y
\bigr\|_2^2
\\
&=
\bigl\|
y_{\Omega_{\text{ang}}}
-
\mathbf M_{\Omega_{\text{ang}}}  A x
\bigr\|_2^2
+
\bigl\|
\mathbf M_{\Omega_{\text{ang}}}(\eta)
\bigr\|_2^2
\\
&\quad
-
2
\Bigl\langle
y_{\Omega_{\text{ang}}}
-
\mathbf M_{\Omega_{\text{ang}}} A x,
\mathbf M_{\Omega_{\text{ang}}}(\eta)
\Bigr\rangle.
\end{align*}
Taking expectations yields
\begin{align*}
&
\mathbb E
\bigl\|
y_{\Omega_{\text{ang}}}
-
\mathbf M_{\Omega_{\text{ang}}} y
\bigr\|_2^2
\\
&=
\mathbb E
\bigl\|
y_{\Omega_{\text{ang}}}
-
\mathbf M_{\Omega_{\text{ang}}}  A x
\bigr\|_2^2
+
\mathbb E
\bigl\|
\mathbf M_{\Omega_{\text{ang}}}(\eta)
\bigr\|_2^2
\\
&\quad
-
2\,
\mathbb E
\Bigl\langle
y_{\Omega_{\text{ang}}}
-
\mathbf M_{\Omega_{\text{ang}}} A x,
\mathbf M_{\Omega_{\text{ang}}}(\eta)
\Bigr\rangle.
\end{align*}
It remains to show that the cross term vanishes.

By construction, 
$y_{\Omega_{\text{ang}}}$ depends only on
$\mathbf M_{\Omega_{\text{ang}}^c} y$.
Since angular splitting masks entire projection angles and noise correlations are restricted to individual angles, it holds that
\[
\mathbf M_{\Omega_{\text{ang}}}(\eta)
\quad\text{is independent of}\quad
\mathbf M_{\Omega_{\text{ang}}^c}(\eta).
\]
From this we conclude that 
$y_{\Omega_{\text{ang}}}$ is independent of
$\mathbf M_{\Omega_{\text{ang}}}(\eta)$.
Because $\eta$ has zero mean the cross term vanishes,
\[
\mathbb E
\Bigl\langle
y_{\Omega_{\text{ang}}}
-
\mathbf M_{\Omega_{\text{ang}}} A x,
\mathbf M_{\Omega_{\text{ang}}}(\eta)
\Bigr\rangle
=0.
\]
We obtain
\begin{align*}
\mathbb E
\bigl\|
y_{\Omega_{\text{ang}}}
-
\mathbf M_{\Omega_{\text{ang}}} y
\bigr\|_2^2
=
\mathbb E
\bigl\|
y_{\Omega_{\text{ang}}}
-
\mathbf M_{\Omega_{\text{ang}}}  A x
\bigr\|_2^2
+
\mathbb E
\bigl\|
\mathbf M_{\Omega_{\text{ang}}}(\eta)
\bigr\|_2^2,
\end{align*}
and summing over all $\Omega_{\text{ang}} \in \mathcal P_{\text{ang}}$
yields the claim.
\end{proof}

\begin{table*}[ht]
\centering
\small
\setlength{\tabcolsep}{3.5pt}
\resizebox{\textwidth}{!}{
\begin{tabular}{lcc|ccc|ccc|ccc|ccc}
\toprule\toprule
& & & \multicolumn{12}{c}{\textbf{Performance metrics}} \\
\cmidrule(lr){4-15}
\textbf{Split} & \textbf{Irregular} & \textbf{Interp.} & \multicolumn{3}{c}{\textbf{PSNR}}& \multicolumn{3}{c}{\textbf{SSIM}}& \multicolumn{3}{c}{\textbf{LPIPS}}& \multicolumn{3}{c}{\textbf{HaarPSI}} \\
\cmidrule(lr){4-6}
\cmidrule(lr){7-9}
\cmidrule(lr){10-12}
\cmidrule(lr){13-15}
& & & D-inf & A-inf & $\mathcal{{P}}$-inf & D-inf & A-inf & $\mathcal{{P}}$-inf & D-inf & A-inf & $\mathcal{{P}}$-inf & D-inf & A-inf & $\mathcal{{P}}$-inf \\
\midrule
\hline
\multicolumn{15}{c}{\textbf{16 angles}} \\
\hline
\rowcolor{plotblue!40}$\mathcal{P}_{\text{lat}}^{2}$ & \cmark & \cmark & 27.474 & 27.555 & 27.528 & 0.696 & 0.717 & 0.723 & 0.330 & 0.313 & 0.308 & 0.536 & 0.538 & 0.537 \\
$\mathcal{P}_{\text{lat}}^{2}$ & \xmark & \cmark & 27.185 & 27.402 & 27.315 & 0.687 & 0.714 & 0.720 & 0.333 & 0.313 & 0.319 & 0.526 & 0.531 & 0.525 \\
$\mathcal{P}_{\text{lat}}^{2}$ & \xmark & \xmark & 22.676 & 25.315 & 26.271 & 0.636 & 0.613 & 0.686 & 0.396 & 0.397 & 0.339 & 0.420 & 0.472 & 0.512 \\
\hdashline
$\mathcal{P}_{\text{lat}}^{3}$ & \xmark & \cmark & 27.323 & 27.479 & 27.448 & 0.699 & 0.712 & 0.720 & 0.331 & 0.319 & 0.303 & 0.527 & 0.534 & 0.532 \\
$\mathcal{P}_{\text{lat}}^{3}$ & \xmark & \xmark & 20.666 & 26.593 & 26.585 & 0.459 & 0.699 & 0.710 & 0.446 & 0.304 & 0.311 & 0.258 & 0.528 & 0.524 \\
\hdashline
$\mathcal{P}_{\text{lat}}^{4}$ & \xmark & \cmark & 27.385 & 27.417 & 27.432 & 0.703 & 0.710 & 0.718 & 0.332 & 0.323 & 0.307 & 0.530 & 0.530 & 0.531 \\
\hline
$\mathcal{P}_{\text{ang}}^{2}$ & \xmark & \cmark & 25.803 & 26.389 & 26.321 & 0.664 & 0.688 & 0.693 & 0.335 & 0.341 & 0.345 & 0.446 & 0.455 & 0.451 \\
$\mathcal{P}_{\text{ang}}^{2}$ & \xmark & \xmark & 19.256 & 25.932 & 26.221 & 0.597 & 0.665 & 0.678 & 0.338 & 0.331 & 0.319 & 0.408 & 0.460 & 0.467 \\
\hdashline
\rowcolor{plotblue!10}$\mathcal{P}_{\text{ang}}^{3}$ & \cmark & \cmark & 26.393 & 26.848 & 26.822 & 0.668 & 0.692 & 0.701 & 0.335 & 0.323 & 0.315 & 0.478 & 0.489 & 0.485 \\
$\mathcal{P}_{\text{ang}}^{3}$ & \xmark & \cmark & 26.295 & 26.740 & 26.685 & 0.667 & 0.690 & 0.697 & 0.317 & 0.319 & 0.320 & 0.476 & 0.481 & 0.476 \\
$\mathcal{P}_{\text{ang}}^{3}$ & \xmark & \xmark & 22.862 & 25.191 & 25.500 & 0.618 & 0.645 & 0.659 & 0.336 & 0.342 & 0.337 & 0.413 & 0.441 & 0.447 \\
\hdashline
$\mathcal{P}_{\text{ang}}^{4}$ & \xmark & \cmark & 26.335 & 26.645 & 26.642 & 0.670 & 0.685 & 0.693 & 0.320 & 0.316 & 0.310 & 0.477 & 0.483 & 0.480 \\
\hline
$\mathcal{P}_{\text{mult}}^{2}$ & \xmark & \cmark & 25.975 & 27.370 & 27.311 & 0.671 & 0.714 & 0.727 & 0.317 & 0.321 & 0.319 & 0.452 & 0.494 & 0.501 \\
\hdashline
\rowcolor{plotblue!10}$\mathcal{P}_{\text{mult}}^{3}$ & \xmark & \cmark & 26.824 & 27.349 & 27.382 & 0.704 & 0.705 & 0.724 & 0.295 & 0.309 & 0.299 & 0.487 & 0.499 & 0.511 \\
\hdashline
$\mathcal{P}_{\text{mult}}^{4}$ & \xmark & \cmark & 27.174 & 27.239 & 27.338 & 0.706 & 0.699 & 0.722 & 0.288 & 0.304 & 0.292 & 0.504 & 0.491 & 0.510 \\
\midrule
\hline
\multicolumn{15}{c}{\textbf{32 angles}} \\
\hline
$\mathcal{P}_{\text{lat}}^{2}$ & \xmark & \cmark & 30.432 & 30.741 & 30.512 & 0.768 & 0.788 & 0.790 & 0.223 & 0.230 & 0.236 & 0.666 & 0.669 & 0.662 \\
$\mathcal{P}_{\text{lat}}^{2}$ & \xmark & \xmark & 20.001 & 29.766 & 29.789 & 0.588 & 0.757 & 0.769 & 0.338 & 0.257 & 0.260 & 0.464 & 0.631 & 0.634 \\
\hdashline
$\mathcal{P}_{\text{lat}}^{3}$ & \xmark & \cmark & 30.814 & 30.898 & 30.666 & 0.784 & 0.790 & 0.793 & 0.212 & 0.213 & 0.216 & 0.672 & 0.675 & 0.669 \\
$\mathcal{P}_{\text{lat}}^{3}$ & \xmark & \xmark & 25.446 & 30.313 & 30.006 & 0.635 & 0.781 & 0.775 & 0.285 & 0.246 & 0.249 & 0.503 & 0.650 & 0.642 \\
\hdashline
\rowcolor{plotblue!40}$\mathcal{P}_{\text{lat}}^{4}$ & \cmark & \cmark & 30.926 & 30.983 & 30.811 & 0.788 & 0.792 & 0.795 & 0.215 & 0.217 & 0.220 & 0.677 & 0.678 & 0.673 \\
$\mathcal{P}_{\text{lat}}^{4}$ & \xmark & \cmark & 30.817 & 30.907 & 30.706 & 0.785 & 0.789 & 0.793 & 0.217 & 0.215 & 0.219 & 0.673 & 0.675 & 0.669 \\
\hline
$\mathcal{P}_{\text{ang}}^{2}$ & \xmark & \cmark & 29.493 & 29.843 & 29.557 & 0.758 & 0.769 & 0.767 & 0.241 & 0.265 & 0.283 & 0.607 & 0.606 & 0.599 \\
$\mathcal{P}_{\text{ang}}^{2}$ & \xmark & \xmark & 19.534 & 29.897 & 29.684 & 0.727 & 0.778 & 0.777 & 0.252 & 0.249 & 0.272 & 0.565 & 0.631 & 0.621 \\
\hdashline
$\mathcal{P}_{\text{ang}}^{3}$ & \xmark & \cmark & 29.823 & 30.178 & 29.888 & 0.765 & 0.774 & 0.772 & 0.239 & 0.248 & 0.267 & 0.621 & 0.627 & 0.620 \\
$\mathcal{P}_{\text{ang}}^{3}$ & \xmark & \xmark & 24.858 & 28.556 & 28.666 & 0.725 & 0.748 & 0.756 & 0.228 & 0.223 & 0.245 & 0.578 & 0.611 & 0.608 \\
\hdashline
\rowcolor{plotblue!10}$\mathcal{P}_{\text{ang}}^{4}$ & \cmark & \cmark & 30.019 & 30.266 & 30.021 & 0.770 & 0.776 & 0.775 & 0.229 & 0.239 & 0.257 & 0.629 & 0.633 & 0.627 \\
$\mathcal{P}_{\text{ang}}^{4}$ & \xmark & \cmark & 29.950 & 30.159 & 29.949 & 0.768 & 0.774 & 0.773 & 0.236 & 0.240 & 0.257 & 0.625 & 0.630 & 0.625 \\
\hline
$\mathcal{P}_{\text{mult}}^{2}$ & \xmark & \cmark & 29.292 & 30.797 & 30.434 & 0.764 & 0.794 & 0.793 & 0.280 & 0.255 & 0.261 & 0.585 & 0.645 & 0.642 \\
\hdashline
\rowcolor{plotblue!10}$\mathcal{P}_{\text{mult}}^{3}$ & \xmark & \cmark & 30.422 & 30.755 & 30.492 & 0.784 & 0.790 & 0.791 & 0.240 & 0.241 & 0.248 & 0.636 & 0.646 & 0.646 \\
\hdashline
$\mathcal{P}_{\text{mult}}^{4}$ & \xmark & \cmark & 30.488 & 30.558 & 30.349 & 0.788 & 0.786 & 0.788 & 0.236 & 0.245 & 0.247 & 0.636 & 0.632 & 0.639 \\
\midrule
\bottomrule
\end{tabular}
}\vspace{0.1cm}
\caption{Performance metrics of three splitting methods on uncorrelated noise data (6000 photon counts). Results are reported for different grid sizes, interpolation (yes \cmark / no \xmark), regular vs. irregular splitting, and inference strategies, separately for 16 and 32 projection angles.}
\label{tab:uncorrelated_6000}
\end{table*}

\begin{table*}[ht]
\centering
\small
\setlength{\tabcolsep}{3.5pt}
\resizebox{\textwidth}{!}{
\begin{tabular}{lc|ccc|ccc|ccc|ccc}
\toprule\toprule
& & \multicolumn{12}{c}{\textbf{Performance metrics}} \\
\cmidrule(lr){3-14}
\textbf{Split} & \textbf{Irregular} & \multicolumn{3}{c}{\textbf{PSNR}}& \multicolumn{3}{c}{\textbf{SSIM}}& \multicolumn{3}{c}{\textbf{LPIPS}}& \multicolumn{3}{c}{\textbf{HaarPSI}} \\
\cmidrule(lr){3-5}
\cmidrule(lr){6-8}
\cmidrule(lr){9-11}
\cmidrule(lr){12-14}
& & D-inf & A-inf & $\mathcal{{P}}$-inf & D-inf & A-inf & $\mathcal{{P}}$-inf & D-inf & A-inf & $\mathcal{{P}}$-inf & D-inf & A-inf & $\mathcal{{P}}$-inf \\
\midrule
\hline
\multicolumn{14}{c}{\textbf{16 angles}} \\
\hline
$\mathcal{P}_{\text{lat}}^{2}$ & \xmark & 26.953 & 27.300 & 27.252 & 0.684 & 0.712 & 0.718 & 0.328 & 0.317 & 0.320 & 0.512 & 0.522 & 0.514 \\
\hdashline
\rowcolor{plotblue!40}$\mathcal{P}_{\text{lat}}^{3}$ & \cmark & 27.357 & 27.400 & 27.490 & 0.701 & 0.712 & 0.722 & 0.336 & 0.322 & 0.303 & 0.528 & 0.528 & 0.530 \\
$\mathcal{P}_{\text{lat}}^{3}$ & \xmark & 27.139 & 27.327 & 27.355 & 0.700 & 0.711 & 0.719 & 0.341 & 0.324 & 0.306 & 0.516 & 0.525 & 0.524 \\
\hdashline
$\mathcal{P}_{\text{lat}}^{4}$ & \xmark & 27.133 & 27.280 & 27.315 & 0.700 & 0.707 & 0.716 & 0.327 & 0.319 & 0.304 & 0.513 & 0.520 & 0.521 \\
\hline
$\mathcal{P}_{\text{ang}}^{2}$ & \xmark & 25.560 & 26.308 & 26.257 & 0.654 & 0.684 & 0.691 & 0.323 & 0.338 & 0.333 & 0.436 & 0.455 & 0.451 \\
\hdashline
\rowcolor{plotblue!10}$\mathcal{P}_{\text{ang}}^{3}$ & \cmark & 25.717 & 26.624 & 26.611 & 0.664 & 0.684 & 0.695 & 0.332 & 0.332 & 0.320 & 0.442 & 0.480 & 0.476 \\
$\mathcal{P}_{\text{ang}}^{3}$ & \xmark & 26.118 & 26.429 & 26.406 & 0.662 & 0.688 & 0.694 & 0.325 & 0.318 & 0.323 & 0.467 & 0.464 & 0.461 \\
\hdashline
$\mathcal{P}_{\text{ang}}^{4}$ & \xmark & 26.122 & 26.405 & 26.422 & 0.662 & 0.677 & 0.686 & 0.328 & 0.325 & 0.315 & 0.465 & 0.472 & 0.471 \\
\hline
$\mathcal{P}_{\text{mult}}^{2}$ & \xmark & 25.830 & 27.134 & 27.131 & 0.669 & 0.707 & 0.723 & 0.318 & 0.322 & 0.320 & 0.442 & 0.481 & 0.492 \\
\hdashline
\rowcolor{plotblue!10}$\mathcal{P}_{\text{mult}}^{3}$ & \xmark & 26.644 & 27.060 & 27.246 & 0.699 & 0.698 & 0.719 & 0.302 & 0.314 & 0.299 & 0.477 & 0.480 & 0.504 \\
\hdashline
$\mathcal{P}_{\text{mult}}^{4}$ & \xmark & 26.947 & 26.922 & 27.089 & 0.705 & 0.691 & 0.717 & 0.300 & 0.315 & 0.299 & 0.490 & 0.473 & 0.496 \\
\midrule
\hline
\multicolumn{14}{c}{\textbf{32 angles}} \\
\hline
$\mathcal{P}_{\text{lat}}^{2}$ & \xmark & 30.007 & 30.324 & 30.113 & 0.765 & 0.782 & 0.782 & 0.237 & 0.245 & 0.248 & 0.642 & 0.646 & 0.642 \\
\hdashline
$\mathcal{P}_{\text{lat}}^{3}$ & \xmark & 30.366 & 30.470 & 30.282 & 0.778 & 0.783 & 0.786 & 0.227 & 0.226 & 0.227 & 0.650 & 0.653 & 0.648 \\
\hdashline
\rowcolor{plotblue!40}$\mathcal{P}_{\text{lat}}^{4}$ & \cmark & 30.478 & 30.527 & 30.335 & 0.782 & 0.785 & 0.787 & 0.221 & 0.224 & 0.228 & 0.655 & 0.656 & 0.651 \\
$\mathcal{P}_{\text{lat}}^{4}$ & \xmark & 30.386 & 30.457 & 30.292 & 0.780 & 0.783 & 0.786 & 0.224 & 0.223 & 0.228 & 0.652 & 0.654 & 0.648 \\
\hline
$\mathcal{P}_{\text{ang}}^{2}$ & \xmark & 29.191 & 29.531 & 29.267 & 0.754 & 0.765 & 0.762 & 0.262 & 0.283 & 0.297 & 0.592 & 0.592 & 0.585 \\
\hdashline
$\mathcal{P}_{\text{ang}}^{3}$ & \xmark & 29.471 & 29.748 & 29.536 & 0.759 & 0.767 & 0.766 & 0.253 & 0.260 & 0.274 & 0.601 & 0.610 & 0.601 \\
\hdashline
\rowcolor{plotblue!10}$\mathcal{P}_{\text{ang}}^{4}$ & \cmark & 29.674 & 29.899 & 29.701 & 0.764 & 0.769 & 0.768 & 0.251 & 0.251 & 0.269 & 0.609 & 0.617 & 0.611 \\
$\mathcal{P}_{\text{ang}}^{4}$ & \xmark & 29.587 & 29.822 & 29.603 & 0.761 & 0.768 & 0.766 & 0.243 & 0.252 & 0.268 & 0.607 & 0.611 & 0.607 \\
\hline
$\mathcal{P}_{\text{mult}}^{2}$ & \xmark & 29.370 & 30.325 & 30.012 & 0.755 & 0.786 & 0.785 & 0.245 & 0.260 & 0.266 & 0.600 & 0.626 & 0.625 \\
\hdashline
\rowcolor{plotblue!10}$\mathcal{P}_{\text{mult}}^{3}$ & \xmark & 29.902 & 30.286 & 30.071 & 0.776 & 0.781 & 0.784 & 0.252 & 0.252 & 0.252 & 0.610 & 0.624 & 0.628 \\
\hdashline
$\mathcal{P}_{\text{mult}}^{4}$ & \xmark & 30.079 & 30.151 & 29.989 & 0.780 & 0.777 & 0.782 & 0.241 & 0.251 & 0.256 & 0.621 & 0.616 & 0.623 \\
\midrule
\bottomrule
\end{tabular}
}\vspace{0.1cm}
\caption{Evaluation of splitting methods across grid size, masking, interpolation, and inference types on uncorrelated noise data (3500 photon counts). Results are separated for 16 and 32 angles.}
\label{tab:uncorrelated_3500}
\end{table*}

\begin{table*}[ht]
\centering
\small
\setlength{\tabcolsep}{3.5pt}
\resizebox{\textwidth}{!}{
\begin{tabular}{lc|ccc|ccc|ccc|ccc}
\toprule\toprule
& & \multicolumn{12}{c}{\textbf{Performance metrics}} \\
\cmidrule(lr){3-14}
\textbf{Split} & \textbf{Irregular} & \multicolumn{3}{c}{\textbf{PSNR}}& \multicolumn{3}{c}{\textbf{SSIM}}& \multicolumn{3}{c}{\textbf{LPIPS}}& \multicolumn{3}{c}{\textbf{HaarPSI}} \\
\cmidrule(lr){3-5}
\cmidrule(lr){6-8}
\cmidrule(lr){9-11}
\cmidrule(lr){12-14}
& & D-inf & A-inf & $\mathcal{{P}}$-inf & D-inf & A-inf & $\mathcal{{P}}$-inf & D-inf & A-inf & $\mathcal{{P}}$-inf & D-inf & A-inf & $\mathcal{{P}}$-inf \\
\midrule
\hline
\multicolumn{14}{c}{\textbf{16 angles}} \\
\hline
\rowcolor{plotblue!10}$\mathcal{P}_{\text{lat}}^{2}$ & \cmark & 24.097 & 24.821 & 24.892 & 0.516 & 0.572 & 0.586 & 0.408 & 0.401 & 0.404 & 0.363 & 0.390 & 0.392 \\
$\mathcal{P}_{\text{lat}}^{2}$ & \xmark & 23.900 & 24.484 & 24.566 & 0.510 & 0.567 & 0.577 & 0.409 & 0.416 & 0.425 & 0.349 & 0.373 & 0.377 \\
\hdashline
$\mathcal{P}_{\text{lat}}^{3}$ & \xmark & 23.921 & 24.426 & 24.551 & 0.526 & 0.558 & 0.573 & 0.428 & 0.411 & 0.413 & 0.350 & 0.370 & 0.374 \\
\hdashline
$\mathcal{P}_{\text{lat}}^{4}$ & \xmark & 23.962 & 24.223 & 24.444 & 0.516 & 0.538 & 0.559 & 0.423 & 0.414 & 0.417 & 0.352 & 0.359 & 0.366 \\
\hline
\rowcolor{plotblue!10}$\mathcal{P}_{\text{ang}}^{2}$ & \cmark & 24.762 & 25.417 & 25.527 & 0.629 & 0.661 & 0.672 & 0.379 & 0.394 & 0.363 & 0.395 & 0.408 & 0.413 \\
$\mathcal{P}_{\text{ang}}^{2}$ & \xmark & 24.714 & 25.269 & 25.249 & 0.631 & 0.658 & 0.667 & 0.367 & 0.379 & 0.363 & 0.390 & 0.399 & 0.397 \\
\hdashline
$\mathcal{P}_{\text{ang}}^{3}$ & \xmark & 24.953 & 25.393 & 25.418 & 0.624 & 0.649 & 0.662 & 0.388 & 0.381 & 0.354 & 0.405 & 0.415 & 0.413 \\
\hdashline
$\mathcal{P}_{\text{ang}}^{4}$ & \xmark & 25.029 & 25.308 & 25.342 & 0.626 & 0.645 & 0.659 & 0.383 & 0.373 & 0.344 & 0.411 & 0.417 & 0.414 \\
\hline
\rowcolor{plotblue!40}$\mathcal{P}_{\text{mult}}^{2}$ & \xmark & 24.869 & 25.516 & 25.757 & 0.619 & 0.630 & 0.662 & 0.351 & 0.371 & 0.348 & 0.395 & 0.403 & 0.431 \\
\hdashline
$\mathcal{P}_{\text{mult}}^{3}$ & \xmark & 25.057 & 25.004 & 25.574 & 0.624 & 0.606 & 0.654 & 0.347 & 0.377 & 0.347 & 0.403 & 0.378 & 0.422 \\
\hdashline
$\mathcal{P}_{\text{mult}}^{4}$ & \xmark & 25.083 & 24.732 & 25.599 & 0.609 & 0.587 & 0.649 & 0.352 & 0.388 & 0.336 & 0.406 & 0.369 & 0.428 \\
\midrule
\hline
\multicolumn{14}{c}{\textbf{32 angles}} \\
\hline
\rowcolor{plotblue!10}$\mathcal{P}_{\text{lat}}^{2}$ & \cmark & 23.732 & 25.384 & 25.853 & 0.511 & 0.569 & 0.591 & 0.422 & 0.390 & 0.368 & 0.345 & 0.411 & 0.423 \\
$\mathcal{P}_{\text{lat}}^{2}$ & \xmark & 23.395 & 25.112 & 25.339 & 0.513 & 0.547 & 0.568 & 0.433 & 0.393 & 0.405 & 0.315 & 0.388 & 0.392 \\
\hdashline
$\mathcal{P}_{\text{lat}}^{3}$ & \xmark & 23.702 & 24.590 & 25.195 & 0.518 & 0.535 & 0.559 & 0.417 & 0.399 & 0.397 & 0.332 & 0.363 & 0.386 \\
\hdashline
$\mathcal{P}_{\text{lat}}^{4}$ & \xmark & 23.731 & 23.964 & 24.862 & 0.504 & 0.516 & 0.544 & 0.413 & 0.410 & 0.410 & 0.340 & 0.347 & 0.369 \\
\hline
$\mathcal{P}_{\text{ang}}^{2}$ & \xmark & 27.416 & 27.641 & 27.491 & 0.720 & 0.731 & 0.730 & 0.306 & 0.322 & 0.336 & 0.502 & 0.499 & 0.494 \\
\hdashline
\rowcolor{plotblue!40}$\mathcal{P}_{\text{ang}}^{3}$ & \cmark & 27.907 & 28.101 & 27.951 & 0.728 & 0.736 & 0.736 & 0.274 & 0.288 & 0.305 & 0.528 & 0.528 & 0.525 \\
$\mathcal{P}_{\text{ang}}^{3}$ & \xmark & 27.664 & 27.824 & 27.694 & 0.720 & 0.730 & 0.730 & 0.286 & 0.295 & 0.311 & 0.517 & 0.516 & 0.512 \\
\hdashline
$\mathcal{P}_{\text{ang}}^{4}$ & \xmark & 27.485 & 27.809 & 27.616 & 0.722 & 0.728 & 0.729 & 0.286 & 0.286 & 0.309 & 0.505 & 0.516 & 0.510 \\
\hline
\rowcolor{plotblue!10}$\mathcal{P}_{\text{mult}}^{2}$ & \xmark & 27.238 & 27.187 & 27.534 & 0.711 & 0.675 & 0.693 & 0.297 & 0.348 & 0.298 & 0.496 & 0.482 & 0.524 \\
\hdashline
$\mathcal{P}_{\text{mult}}^{3}$ & \xmark & 27.228 & 26.886 & 27.453 & 0.703 & 0.666 & 0.695 & 0.299 & 0.341 & 0.292 & 0.501 & 0.467 & 0.516 \\
\hdashline
$\mathcal{P}_{\text{mult}}^{4}$ & \xmark & 27.012 & 26.716 & 27.286 & 0.664 & 0.634 & 0.683 & 0.291 & 0.325 & 0.301 & 0.493 & 0.457 & 0.506 \\
\midrule
\bottomrule
\end{tabular}
}\vspace{0.1cm}
\caption{Evaluation of splitting methods across grid size, masking, interpolation, and inference types for correlated noise. Results are separated for 16 and 32 angles.}
\label{tab:correlated}
\end{table*}

\begin{figure*}
    \centering
    \includegraphics[width=1.01\linewidth]{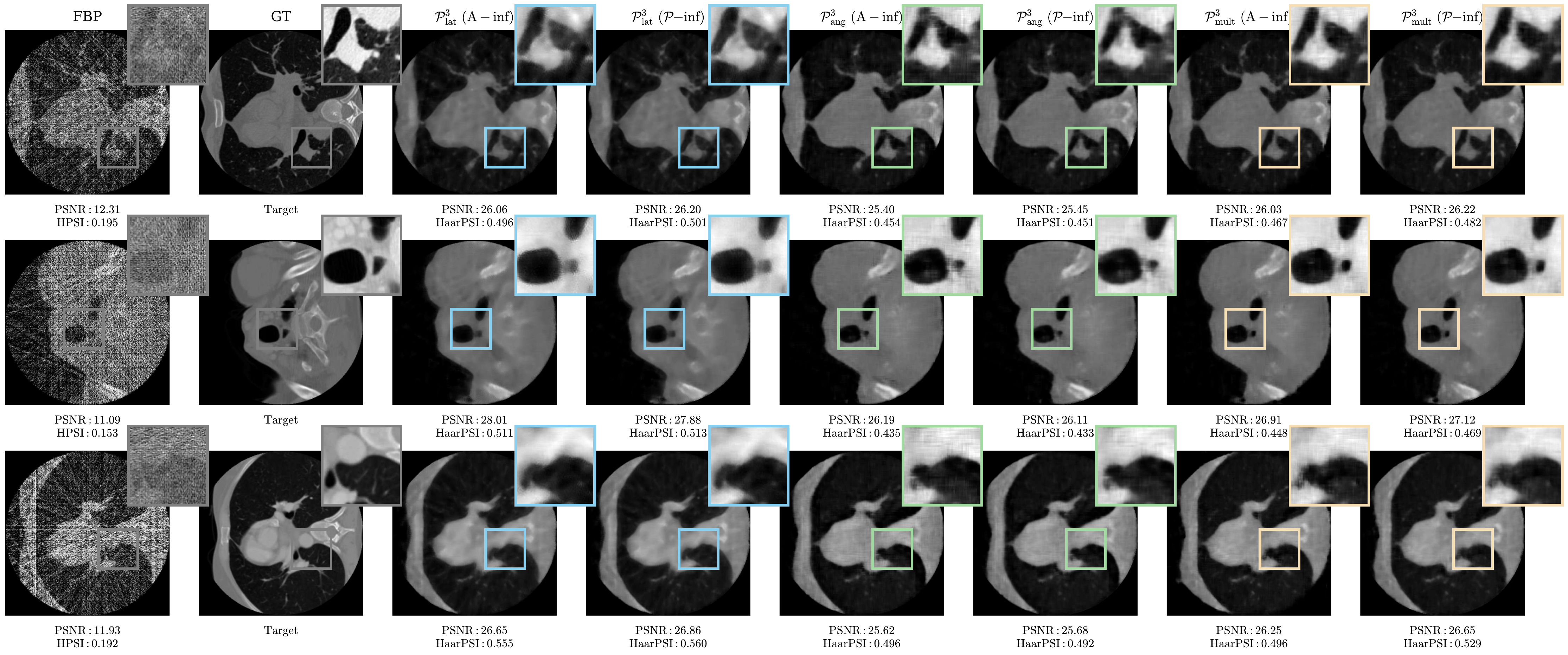}
\caption{Reconstruction results on three examples of the LoDoPaB-CT dataset for 16 projection angles with Poisson noise at a photon count of 3500. For each splitting scheme, the shown reconstructions correspond to the best-performing configuration highlighted in Table~\ref{tab:uncorrelated_3500}. Columns 3--4 show $\mathcal{P}_{\textrm{lat}}^3$ (irregular splitting), columns 5-6 present $\mathcal{P}_{\textrm{ang}}^3$ (irregular splitting), and columns 7-8 show $\mathcal{P}_{\textrm{mult}}^3$. The best-performing inference strategies A-inf and $\mathcal{P}$-inf is displayed across all configurations.}
    \label{reco_16_3500_uncorr}
\end{figure*}

\begin{figure*}
    \centering
    \includegraphics[width=1.01\linewidth]{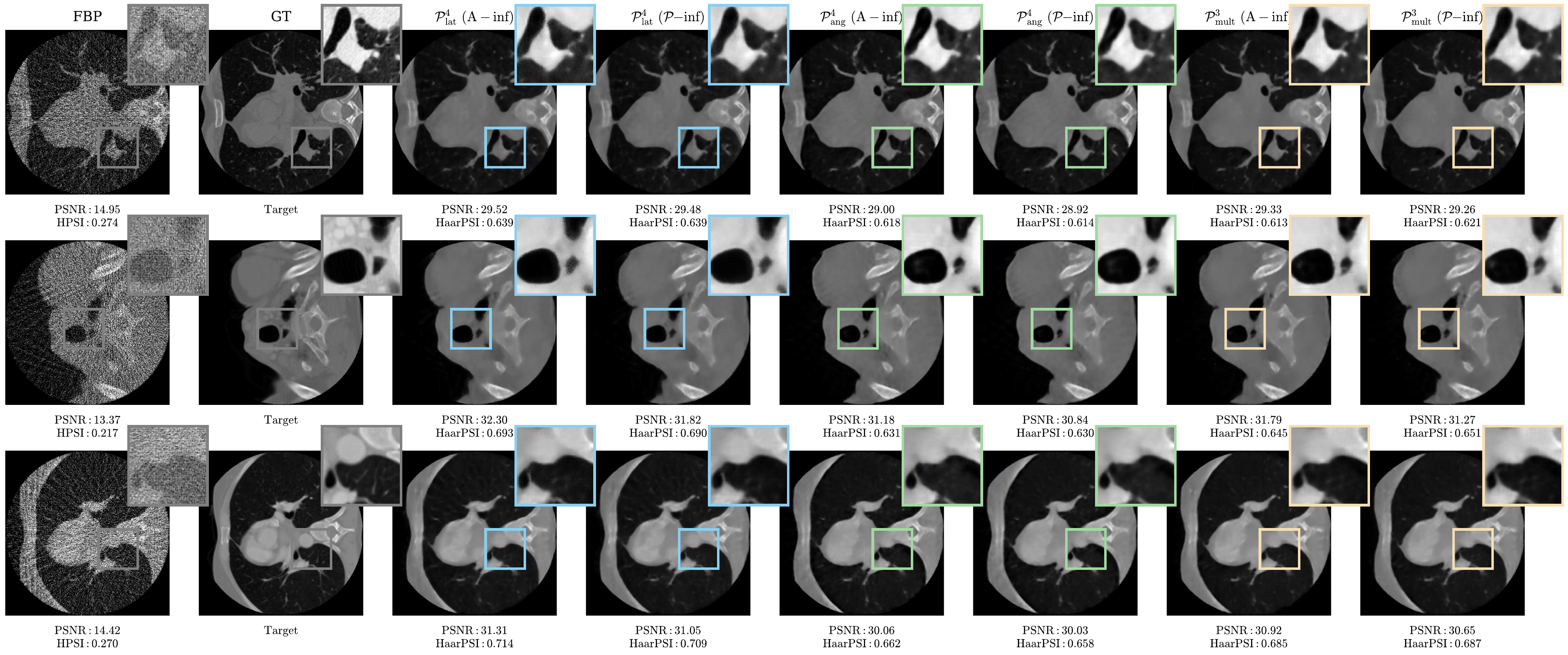}
\caption{Reconstruction results on three examples of the LoDoPaB-CT dataset for 32 projection angles with Poisson noise at a photon count of 3500. For each splitting scheme, the shown reconstructions correspond to the best-performing configuration highlighted in Table~\ref{tab:uncorrelated_3500}. columns 2-3 show $\mathcal{P}_{\textrm{lat}}^4$ ($s=16$, irregular splitting), columns 4-5 present $\mathcal{P}_{\textrm{ang}}^4$ and the last two columns depict $\mathcal{P}_{\textrm{mult}}^3$ (irregular splitting). The best-performing inference strategies A-inf and $\mathcal{P}$-inf is displayed across all configurations.}
    \label{reco_32_3500_uncorr}
\end{figure*}

\begin{figure*}
    \centering
    \includegraphics[width=0.9999\linewidth]{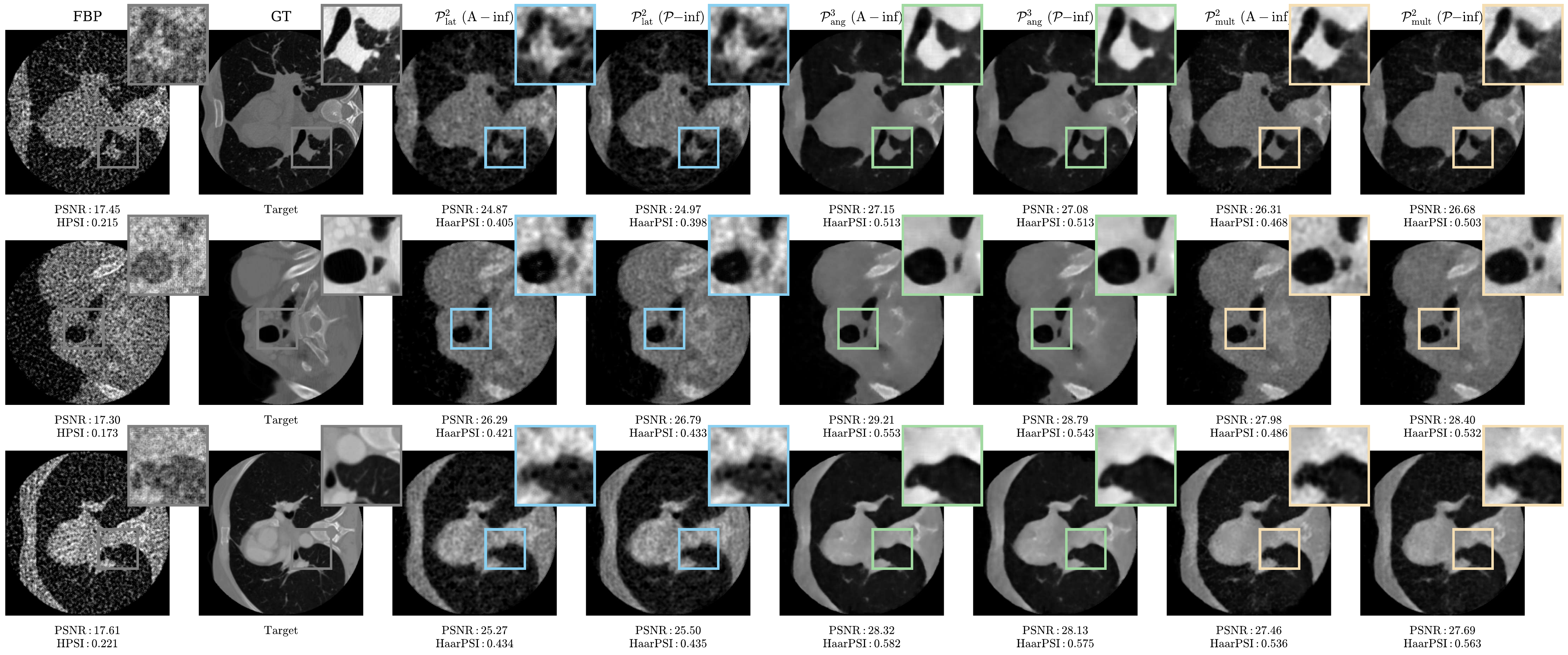}
\caption{Reconstruction results on three examples of the LoDoPaB-CT dataset for 32 projection angles with correlated noise. For each splitting scheme, the shown reconstructions correspond to the best-performing configuration highlighted in Table~\ref{tab:correlated}. columns 2-3 show $\mathcal{P}_{\textrm{lat}}^2$ (irregular splitting), columns 4-5 present $\mathcal{P}_{\textrm{ang}}^3$ (irregular splitting) and the last two columns show $\mathcal{P}_{\textrm{mult}}^2$. The best-performing inference strategies A-inf and $\mathcal{P}$-inf is displayed across all configurations.}
    \label{reco_32_corr}
\end{figure*}

\begin{figure*}
    \centering
    \includegraphics[width=1.01\linewidth]{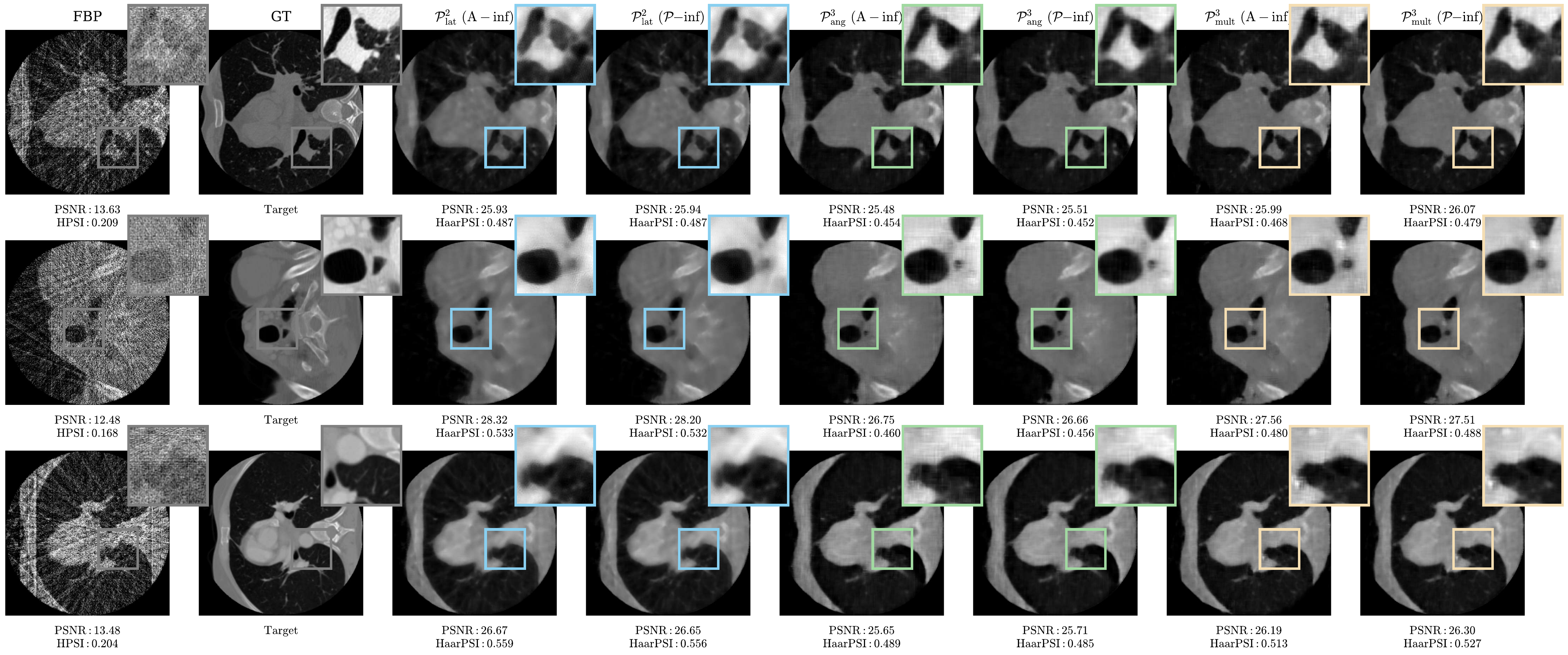}
\caption{Reconstruction results on three examples of the LoDoPaB-CT dataset for 16 projection angles with Poisson noise at a photon count of 6000. For each splitting scheme, the shown reconstructions correspond to the best-performing configuration highlighted in Table~\ref{tab:uncorrelated_6000}. columns 2-3 show $\mathcal{P}_{\textrm{lat}}^2$ (irregular splitting),  columns 5-6 show $\mathcal{P}_{\textrm{ang}}^4$ (irregular splitting), and columns 7-8 present $\mathcal{P}_{\textrm{mult}}^3$. The best-performing inference strategies A-inf and $\mathcal{P}$-inf is displayed across all configurations.}
    \label{reco_16_6000_uncorr}
\end{figure*}

\begin{figure*}[ht]
    \centering
\includegraphics[width=0.995\linewidth]{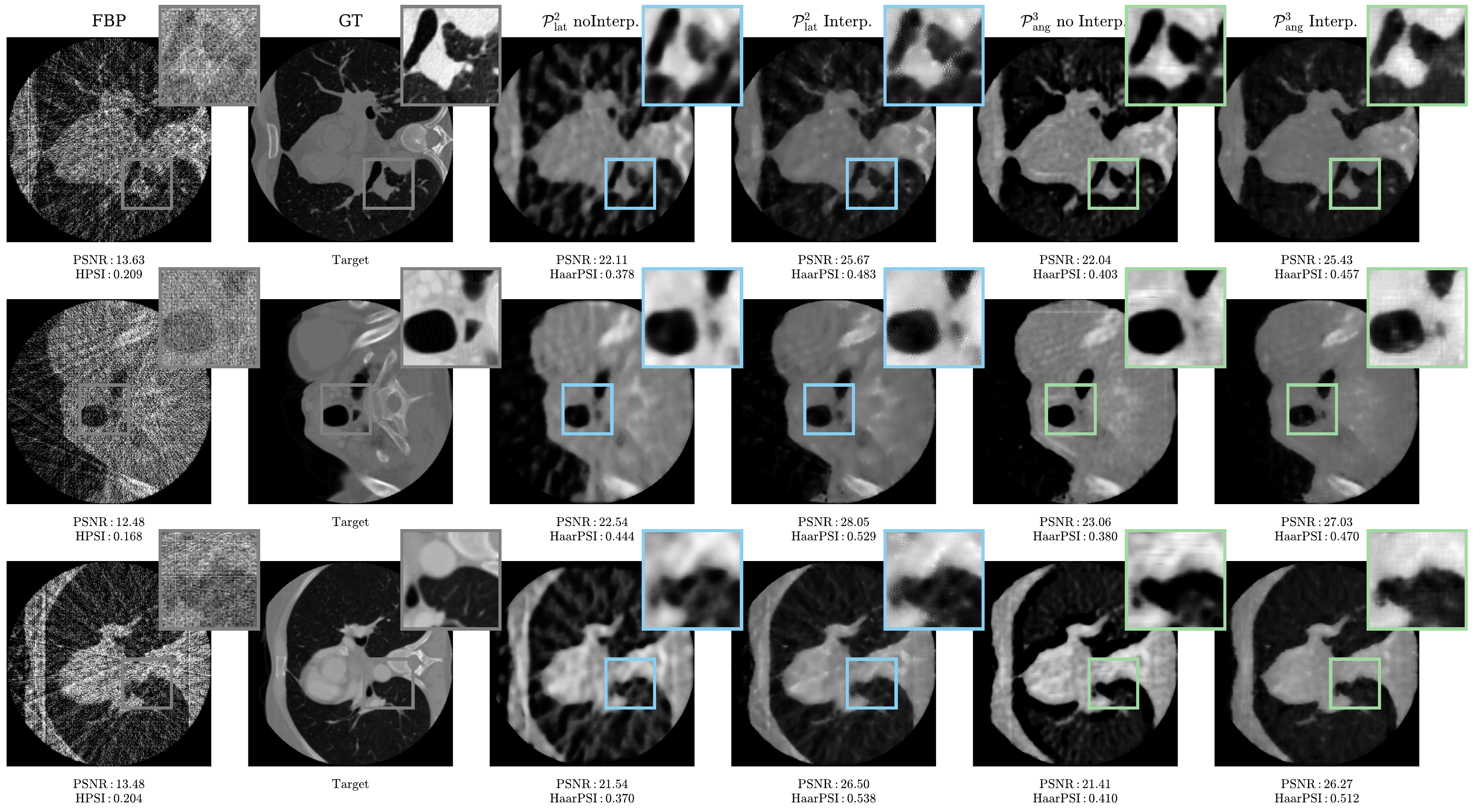}
    \caption{Reconstruction results on the LoDoPaB-CT dataset for 16 projection angles with Poisson noise (intensity 6000). We compare $\mathcal{P}_{\text{ang}}$-splitting with and without linear interpolation of masked values before applying FBP.}
    \label{interpolation}
\end{figure*}

\begin{figure}
    \centering
    \includegraphics[width=0.99\linewidth]{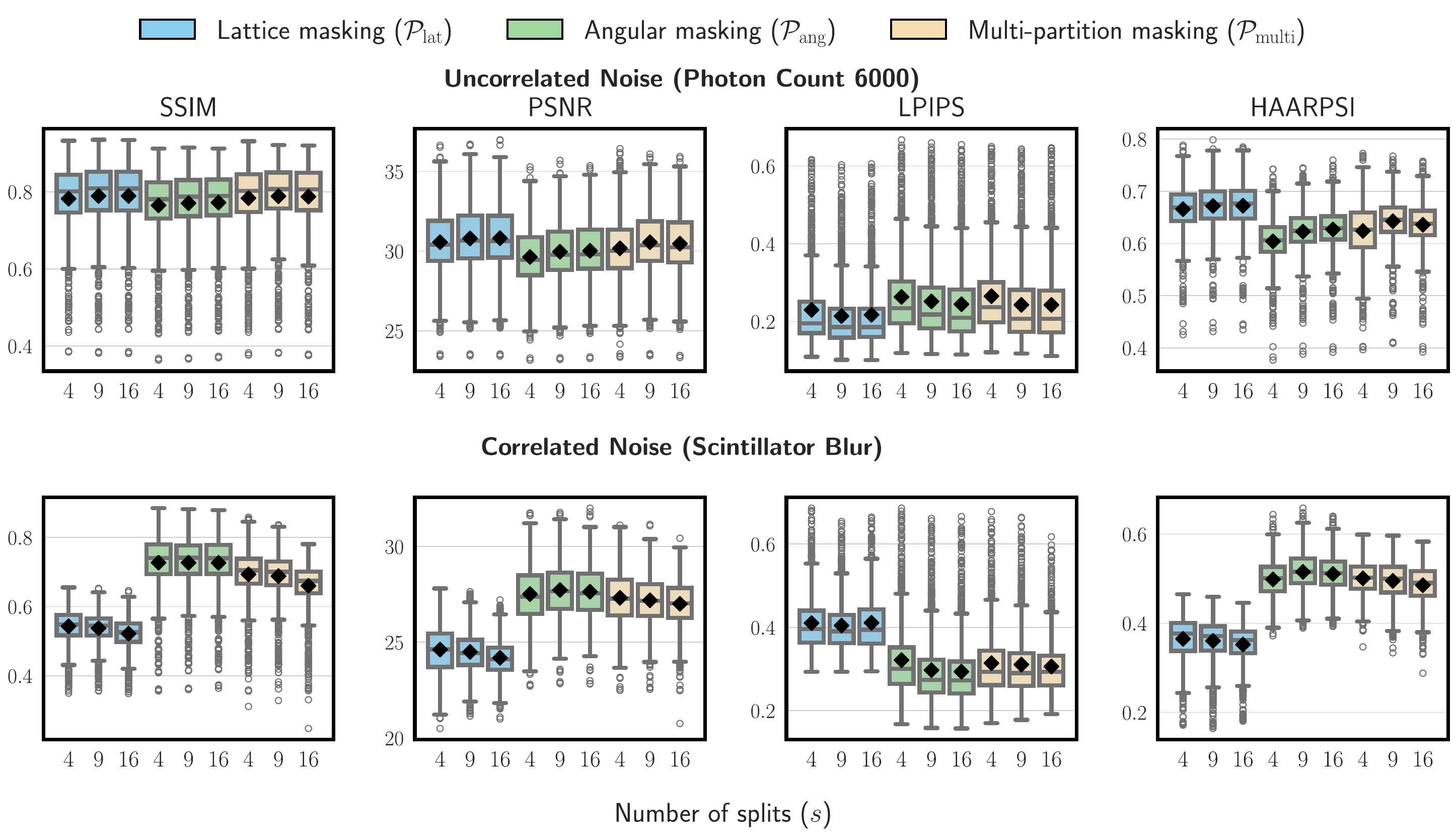} 
    \caption{\textbf{All evaluation metrics for different number of partitions ($s=4,9,16)$ under uncorrelated and correlated noise for $\mathcal{P}-$inf.}}
    \label{gridsizes}
\end{figure}

\begin{figure}
    \centering
    \includegraphics[width=0.8\linewidth]{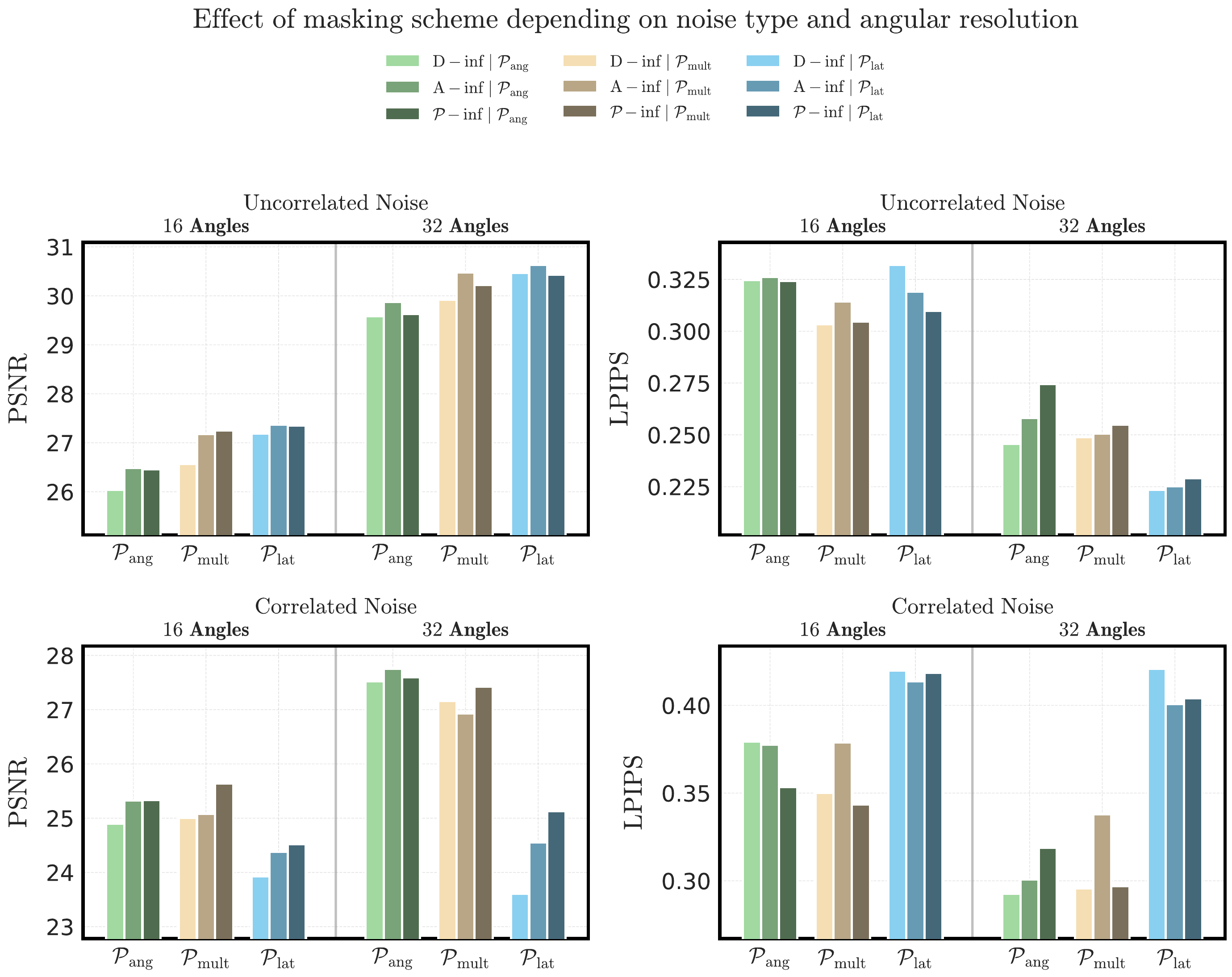} 
\caption{\textbf{PSNR and LPIPS under uncorrelated and correlated noise for the three splitting schemes. For each splitting scheme, the reported values are averaged over the three numbers of splits considered (4, 9, and 16).}}
    \label{lpips_psnr_split}
\end{figure}
\begin{figure}
    \centering
    \includegraphics[width=0.99\linewidth]{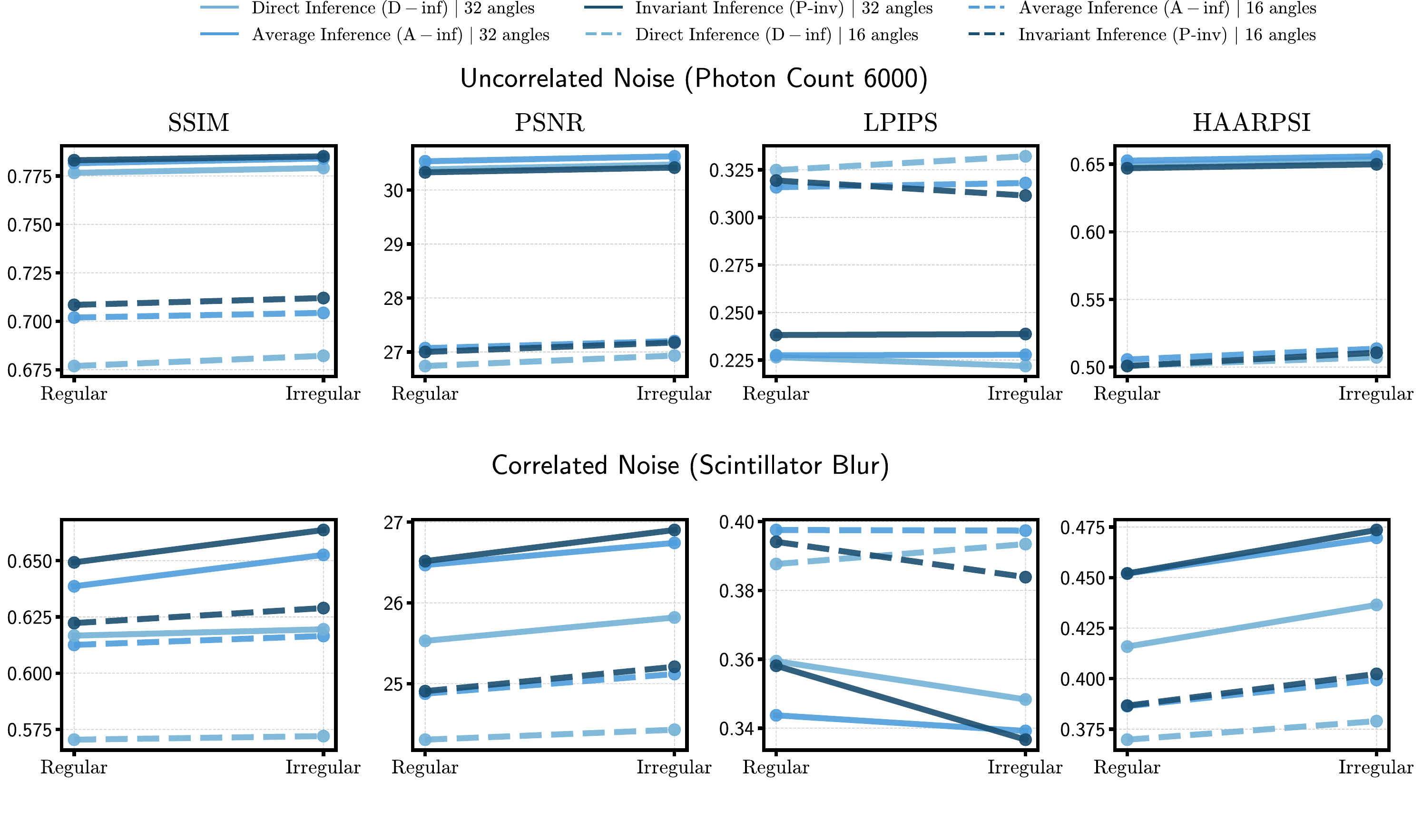} 
    \caption{\textbf{SSIM, PSNR, LPIPS and HaarPSI values showing the slight benefits of irregular splitting during training for the three different inference strategies under uncorrelated and correlated noise.}}
    \label{metrics_structured_vs_unstructured}
\end{figure}

\begin{table*}[ht]
\centering
\small
\setlength{\tabcolsep}{3.5pt}
\resizebox{\textwidth}{!}{
\begin{tabular}{lcc|ccc|ccc|ccc|ccc}
\toprule\toprule
& & & \multicolumn{12}{c}{\textbf{Performance metrics}} \\
\cmidrule(lr){4-15}
\textbf{Split} & \textbf{Irregular} & \textbf{Interp.} & \multicolumn{3}{c}{\textbf{PSNR}}& \multicolumn{3}{c}{\textbf{SSIM}}& \multicolumn{3}{c}{\textbf{LPIPS}}& \multicolumn{3}{c}{\textbf{HaarPSI}} \\
\cmidrule(lr){4-6}
\cmidrule(lr){7-9}
\cmidrule(lr){10-12}
\cmidrule(lr){13-15}
& & & D-inf & A-inf & $\mathcal{{P}}$-inf & D-inf & A-inf & $\mathcal{{P}}$-inf & D-inf & A-inf & $\mathcal{{P}}$-inf & D-inf & A-inf & $\mathcal{{P}}$-inf \\
\midrule
\hline
\multicolumn{15}{c}{\textbf{100 angles}} \\
\hline
$\mathcal{P}_{\text{lat}}^{3}$ & \cmark & \cmark & 23.287 & 23.435 & 23.471 & 0.660 & 0.668 & 0.665 & 0.322 & 0.320 & 0.312 & 0.348 & 0.355 & 0.358 \\
\rowcolor{plotblue!10}$\mathcal{P}_{\text{lat}}^{3}$ & \xmark & \cmark & 23.421 & 23.555 & 23.572 & 0.658 & 0.666 & 0.662 & 0.324 & 0.326 & 0.316 & 0.366 & 0.372 & 0.373 \\
$\mathcal{P}_{\text{lat}}^{3}$ & \xmark & \xmark & 22.586 & 23.523 & 23.528 & 0.575 & 0.699 & 0.683 & 0.356 & 0.330 & 0.345 & 0.312 & 0.344 & 0.345 \\
\hline
\rowcolor{plotblue!40}$\mathcal{P}_{\text{ang}}^{3}$ & \cmark & \cmark & 23.690 & 23.867 & 23.911 & 0.753 & 0.762 & 0.761 & 0.309 & 0.322 & 0.325 & 0.331 & 0.337 & 0.339 \\
$\mathcal{P}_{\text{ang}}^{3}$ & \xmark & \cmark & 23.492 & 23.704 & 23.759 & 0.747 & 0.755 & 0.752 & 0.281 & 0.292 & 0.296 & 0.330 & 0.335 & 0.337 \\
$\mathcal{P}_{\text{ang}}^{3}$ & \xmark & \xmark & 23.462 & 23.448 & 23.507 & 0.744 & 0.739 & 0.740 & 0.270 & 0.291 & 0.296 & 0.335 & 0.333 & 0.334 \\
\hline
$\mathcal{P}_{\text{mult}}^{3}$ & \xmark & \cmark & 23.505 & 23.630 & 23.694 & 0.691 & 0.697 & 0.697 & 0.301 & 0.304 & 0.305 & 0.356 & 0.361 & 0.364 \\
\rowcolor{plotblue!10}$\mathcal{P}_{\text{mult}}^{3}$ & \xmark & \xmark & 23.481 & 23.580 & 23.645 & 0.720 & 0.715 & 0.712 & 0.299 & 0.307 & 0.301 & 0.344 & 0.353 & 0.354 \\
\midrule
\bottomrule
\end{tabular}
}
\caption{Evaluation of splitting methods across masking, interpolation, and inference types on 2DeteCT with 100 angles.}
\label{tab:splitting_results_2detect}
\end{table*}
\end{document}